\documentclass[11pt]{article}
\usepackage[square,numbers,sort]{natbib}
\usepackage[margin=1in]{geometry}
\usepackage[utf8]{inputenc}
\usepackage{graphicx}
\usepackage{bbm}
\usepackage{subfig}
\usepackage{adjustbox}
\usepackage{booktabs}
\usepackage{multirow}
\def\Real{\mathop{\mathbb{R}}\nolimits}

\def\argmin{\mathop{\rm argmin}\nolimits}

\newcommand{\bx}{\boldsymbol{x}}
\newcommand{\by}{\boldsymbol{y}}

\newcommand{\bmu}{\boldsymbol{\mu}}
\newcommand{\bA}{\boldsymbol{A}}
\newcommand{\bB}{\boldsymbol{B}}

\newcommand{\bI}{\boldsymbol{I}}

\newcommand{\bL}{\boldsymbol{L}}

\newcommand{\bQ}{\boldsymbol{Q}}

\newcommand{\bS}{\boldsymbol{S}}

\newcommand{\bV}{\boldsymbol{V}}

\newcommand{\bX}{\boldsymbol{X}}

\newcommand{\bZ}{\boldsymbol{Z}}

\newcommand{\bsigma}{\boldsymbol{\sigma}}

\newcommand{\bPi}{\boldsymbol{\Pi}}

\newcommand{\E}{\mathbb{E}}
\newcommand{\I}{\mathcal{I}}
\newcommand{\J}{\mathcal{J}}
\newcommand{\cL}{\mathcal{L}}
\newcommand{\cO}{\mathcal{O}}
\newcommand{\sP}{\mathbb{P}}
\newcommand{\one}{\mathbbm{1}}
\newcommand{\Hilbert}{\mathcal{H}}

\usepackage{amsmath, amssymb,amsthm}
\usepackage{algorithm}
\usepackage{algpseudocode}
\usepackage{array}
\usepackage{array, makecell}
\usepackage{longtable}
\usepackage{booktabs}
\usepackage{authblk}
\usepackage{hyperref}
\usepackage[dvipsnames]{xcolor}
\definecolor{royalblue}{rgb}{0.0, 0.22, 0.66}
\definecolor{RoyalBlue}{cmyk}{1, 0.90, 0, 0}
\hypersetup{
    citecolor=RoyalBlue,
    colorlinks=true,
    linkcolor=RoyalBlue,
    filecolor=magenta,      
    urlcolor=cyan,
}

\usepackage{tikz}
\usetikzlibrary{arrows.meta,
                chains,
                positioning,
                shapes.geometric,
                automata,
                positioning
                }
\makeatletter
\def\mathcolor#1#{\@mathcolor{#1}}
\def\@mathcolor#1#2#3{%
  \protect\leavevmode
  \begingroup
    \color#1{#2}#3%
  \endgroup
}
\makeatother

\newtheorem{theorem}{Theorem}
\newtheorem{corollary}{Corollary}

\usepackage{mathrsfs}

\theoremstyle{remark}

\newtheorem{ass}{A \hspace{-6pt}}

\begin{document}

\title{Robust Principal Component Analysis: A Median of Means Approach}

 \author[1]{Debolina Paul}
\author[2]{Saptarshi Chakraborty\thanks{Correspondence to: \texttt{saptarshic@berkeley.edu}.}}

 \author[3,4]{Swagatam Das}
 
 \affil[1]{Department of Statistics, Stanford University}
 \affil[2]{Department of Statistics, University of California, Berkeley}
 \affil[3]{Electronics and Communication Sciences Unit, Indian Statistical Institute, Kolkata, India}
\affil[4]{Institute for Advancing Intelligence (IAI), TCG CREST, Kolkata India}

\maketitle
\begin{abstract}
Principal Component Analysis (PCA) is a fundamental tool for data visualization, denoising, and dimensionality reduction. It is widely popular in Statistics, Machine Learning, Computer Vision, and related fields. However, PCA is well-known to fall prey to outliers and often fails to detect the true underlying low-dimensional structure within the dataset. Following the Median of Means (MoM) philosophy, recent supervised learning methods have shown great success in dealing with outlying observations without much compromise to their large sample theoretical properties. This paper proposes a PCA procedure based on the MoM principle. Called the \textbf{M}edian of \textbf{M}eans \textbf{P}rincipal \textbf{C}omponent \textbf{A}nalysis (MoMPCA), the proposed method is not only computationally appealing but also achieves optimal convergence rates under minimal assumptions. In particular, we explore the non-asymptotic error bounds of the obtained solution via the aid of the Rademacher complexities while granting absolutely no assumption on the outlying observations. The derived concentration results are not dependent on the dimension because the analysis is conducted in a separable Hilbert space, and the results only depend on the fourth moment of the underlying distribution in the corresponding norm. The proposal's efficacy is also thoroughly showcased through simulations and real data applications.
\end{abstract}




%

\section{Introduction}
\label{sec:introduction}


%
%
%
%
Principal component analysis (PCA) \cite{pearson1901liii, wold1987principal} is perhaps the most well-known statistical method for linear dimensionality reduction \cite{hastie2009elements}. Given a set of (mean-centered) points $\mathcal{X}=\{\bx_1,\dots,\bx_N\} \subset \Real^p$, the PCA computes a small number of orthonormal basis vectors, which characterize most of the variability within the data cloud. 
Mathematically, to find a $d$-dimensional ($d \le p)$ representation of $\mathcal{X}$, one projects these $N$ points on to a $d$-dimensional subspace as $\{\bQ\bx_1,\dots,\bQ\bx_N\}$. where, $\bQ$ is a $p \times p$ projection matrix of rank $d$. Denoting the set of all $p \times p$ projection matrices of rank $d$ as $\mathcal{Q}_d$, the PCA minimizes the following objective:
\begin{equation}\label{e1}
   \frac{1}{N}\sum_{i=1}^N \|\bx_i - \bQ \bx_i\|^2 = \frac{1}{N}\sum_{i=1}^N \bx_i^\top (\bI - \bQ) \bx_i, 
\end{equation}
where $\bQ    \in \mathcal{Q}_d$.

Though efficient in many real-life scenarios, PCA is well documented to suffer from many disadvantages such as information loss, poor performance when the data lies in a non-affine manifold, poor performance for high-dimensional data, etc. Researchers continue to tackle these issues through modifications to the original versions, such as probabilistic PCA \cite{tipping1999probabilistic}, kernel PCA \cite{scholkopf1997kernel}, sparse PCA \cite{zou2006sparse}, robust PCA \cite{candes2011robust,NEURIPS2019_73f104c9}, and so on. Among many disadvantages of PCA, like information loss and poor interpretation, one of the major concerns is that it is well known to fail in the presence of a single outlying observation. Its fragility against severely corrupted data points often puts its reliability at risk. Popular approaches for making the PCA more robust to these outlying and corrupted observations attempt to represent the original data matrix ${\bx}$ as a sum of a low-rank matrix $\bL$ and a sparse matrix $\bS$ \cite{wright2009,candes2011robust,chandrasekaran2011rank,kang2015robust,pmlr-v48-chiang16, 8701558, 8818654}. However, many methods mentioned earlier only allow recovery guarantees under very restrictive assumptions. For example, \cite{candes2011robust} restricts the structure of $\bS$ to a Bernoulli model.  


Robust PCA has recently gained popularity, primarily due to the rise of Big Data. These big datasets often contain outliers and hence require delicate handling, usually done by implementing robust statistics against outlying observations. Recently, there has been much interest in the robust dimensionality reduction community on proposing efficient frameworks for PCA \cite{tang2011robust,vaswani2018robust,wang2017angle}. There has also been an array of work based on geometric median-based approaches \cite{cardot2013efficient,cardot2017fast,fritz2012comparison,cohen2016geometric}, which focuses on a robust estimation of the covariance matrix and finding the eigendecomposition of the same to find the proper subspace to project. Although efficient in practice, many techniques mentioned above do not come with finite-sample theoretical guarantees. Those with asymptotic rates often assume that the data points are independent and identically distributed, which does not hold when the data is contaminated with outliers.



To bridge this methodological gap, the Median of Means (MoM) literature provides a promising and attractive framework to adapt PCA to become outlier robust, as well as help us preserve our theoretical understanding of finite-sample error bounds. 
As opposed to the classical Vapnik-Chervonenkis Empirical Risk Minimization (ERM) \cite{vapnik2013nature}, the  MoM philosophy provides a more robust framework for efficiently finding estimates of the actual underlying parameter. Although MoM estimators have been in the literature for quite a long time, it has recently been introduced to the Machine learning community \cite{lugosi2019regularization, lecue2020robust, bartlett2002model, lecue2020class}. Besides being insensitive to outliers, the MoM estimators also possess a solid theoretical backbone comprising exponential concentration results under the mild restriction of finite variance \cite{lugosi2019regularization, lecue2020robust, bartlett2002model,lerasle2019lecture,laforgue2019medians}. Recently, several near-optimal results were established from this perspective concerning regression \cite{mathieu2021excess,lugosi2019regularization}, bandits \cite{bubeck2013bandits}, mean estimation  \cite{minsker2018uniform}, clustering \cite{klochkov2020robust,brunet2020k,paul2021uniform}, classification \cite{lecue2020robust1}, and optimal transport \cite{pmlr-v130-staerman21a}. 

In this paper, we develop a Principal Component Analysis (PCA) in the framework of the MoM principle. The proposal is not only computationally efficient but also theoretically appealing. It is well known that the theoretical understanding of many classical Empirical Risk Minimization (ERM) \cite{vapnik2013nature} such as PCA hinges on the assumptions that the data should be independently and identically distributed (i.i.d.) and have sub-gaussian behavior. However, real data that may be corrupted with outliers do not offer us the luxury to make such simplifying assumptions. Towards our theoretical investigation, we assume that the dataset can be split into two categories: the set of inliers ($\I$) and the set of outliers ($\cO$), i.e., $\{1,\dots,N\}=\I \cup \cO$. The data points in $\I$ are assumed to be independently distributed according to the distribution $P$, which has a finite fourth moment. We make no assumptions on the points in $\cO$; thus, allowing them to be dependent, unboundedly large, having distributions that are entirely dissimilar to $P$, allowing them to be heavy-tailed, etc. Our theoretical analysis hinges on the application of Rademacher complexity \cite{bartlett2002rademacher}, and symmetrization arguments \cite{vapnik2013nature,devroye2013probabilistic}. From a theoretical viewpoint, we emphasize that our analyses are derived under more general and interpretable conditions compared to the literature \cite{lecue2020robustcl}. To further generalize the proposed setup, we conduct all our theoretical analyses in a separable Hilbert space. This allows us to derive \textit{dimension-free bounds} that only depend on the data distribution through the fourth moment of the inliers. We quickly recover the data results in a finite-dimensional vector space showing that the derived rates match the state-of-the-art \cite{paul2021uniform,lecue2020robust}.

The main contributions of this article can be summarized as follows:

\begin{itemize}

    \item This paper proposes a simple yet efficient framework for robust PCA under the median of means
paradigm. Apart from being practically efficient, the method comes with a strong theoretical backbone of finite-sample error rates under the mild assumption of the existence of a finite fourth moment of the underlying data distribution. Such consistency guarantees and error rates that are derived subsequently, ensure that the results are reliable and accurate under mild assumptions.

\item Furthermore, the derived generalization bounds are dimension-free meaning that the error rate is also valid for infinite-dimensional Hilbert spaces.

\item We emphasize that as opposed to many prominent works \cite{zhang2015analysis,candes2011robust}, we only assume that the number of outliers is
$o(N)$, which is the natural definition of outliers. We do not make any assumptions on the
distribution of the outliers allowing them to be dependent, unboundedly large, having distributions that are entirely dissimilar to the inlier distribution, etc.

\item The detailed experimental analysis via simulation and real-life datasets on background modeling in video and anomaly detection demonstrates the efficacy of MoMPCA compared to the state-of-the-art in different experimental settings, indicating that the
the proposed method is highly effective in practice.

\end{itemize}

The rest of the paper is organized as follows. In section \ref{formulation}, we formulate the MoMPCA, followed by a detailed theoretical analysis under minimal and interpretable assumptions in section \ref{theory}. The experimental results are discussed in section \ref{experments}, followed by concluding remarks in section \ref{conclusion}.

\textbf{Notations:} Before we proceed further, we discuss a few notations used in this paper. Vectors are dented with bold lower-case letters, and matrices are denoted with bold uppercase letters. $\langle \bA , \bB \rangle = \text{trace} (\bA^\top \bB)$ denotes the Frobenious inner product between two matrices. $\|\bA\| =\sqrt{\langle \bA , \bA \rangle}$ denotes the Frobenious norm of the matrix $\bA$. $\sP(E)$ denotes the probability of the event $E$ and $\E_{\bZ}(\cdot)$ denotes the expectation with respect to the random vector $\bZ$. A random variable $\sigma$ is said to be Rademacher if it takes values in $\{+1,-1\}$ with equal probability. $[N]$ denotes the set $\{1,\dots,N\}$, for any $N \ge 0$ and $2^A$ denotes the power set of the $A$. For any two sets $A$ and $B$, $A^B$ denotes the set of all functions from $B$ to $A$. $\mu_k=\int\|\bx\|_2^k dP$ denote the $k$-th moment of $P$. $\mathcal{Q}_d$ denotes the set of all $p\times p$ real projection matrices of rank $d$. $one(\cdot)$ denotes the indicator function.
\subsection{Related Works}
This section discusses some of the works related to robust clustering. The most noted work in this direction is arguably the work by \cite{candes2011robust}, which appeals to the philosophy of writing the data matrix as a sum of a low rank and a sparse matrix and minimizing the subsequent approximation error. \cite{tang2011robust} extended this idea by developing a convex program for a low-rank and block-sparse matrix decomposition. A survey on these papers can be found in \cite{vaswani2018robust}. \cite{liu2014scatter} developed a new subspace selection method called angle linear discriminant embedding (ALDE) for dimensionality reduction in a supervised learning setting. Angle PCA proposed by \cite{wang2017angle} develops an iterative algorithm to minimize an $\ell_2$ norm-based error and maximize the summation of the ratio between the variance and the reconstruction error to preserve rotational invariance while tackling outliers. \cite{liu2020angular} further extended these ideas to develop angular embedding for conducting robust PCA. \cite{hauberg2014grassmann,hauberg2015scalable} introduced Grassman averages to express dimensionality reduction as an average of the subspaces spanned by the data. The authors further exploit its properties to develop a robust Grassman average as a form of robust PCA. \cite{bouwmans2018applications} showed the application of robust PCA in image and video processing. Other than the median of the means PCA method, there are other types of robust PCA methods as well, such as the optimal mean methods \cite{gao2017angle, gao2020enhanced} and the avoid mean calculation \cite{liao2018robust}. There has also been an array of work based on geometric median-based approaches \cite{cardot2013efficient,cardot2017fast,fritz2012comparison,cohen2016geometric}, which focuses on a robust estimation of the covariance matrix and finding the eigendecomposition of the same to find the proper subspace to project.

\subsection{Motivation and a Proof of Concept Result}
Robust PCA is commonly formulated by assuming that the data matrix can be decomposed into a low-rank signal component and a noise component. However, this framework is limited by the assumption that the effect of outliers is mild and cannot grow unboundedly large. To address this issue, the authors propose using the median of means framework, which is commonly used in robust mean estimation, to view traditional PCA as an empirical risk minimization problem. Because the objective cannot be solved using the eigen-decomposition trick, the authors propose an alternative method called projected Adagrad. The authors note that the most challenging aspect of the project was extending the theoretical results to an infinite-dimensional Hilbert space and deriving dimension-free bounds under only the existence of a finite fourth moment of the data distribution. This result not only provides meaningful conclusions with minimal assumptions but also paves the way for future research on robust kernel-based methods.

As a motivating example, in Fig.~\ref{fig:motivate}, we show the results of classical PCA and MoMPCA on a toy dataset (available at \href{https://github.com/SaptarshiC98/MOMPCA}{https://github.com/SaptarshiC98/MOMPCA}). The dataset has $1000$ observations, out of which $10$ of them, i.e., $1\%$ are outliers. The inliers are generated from a $2$-dimensional Gaussian random variable with variance $10$ in the first dimension, $1$ in the second dimension, and covariance $8$. The outliers are generated from another $2$-dimensional Gaussian random variable with mean vector $(15,50)$, variance $5$ in each dimension with zero covariance between them. Finally, we run the vanilla PCA (shown in red) and the proposed MoMPCA (shown in blue) and plot their first principal components. From Fig.~\ref{fig:motivate}, it is clear that a mere $1$\% outlying observations are enough to render the vanilla PCA ineffective, whereas, MoMPCA correctly identifies the direction of maximum variation. In terms of excess risk (details provided in section \ref{recover}), we calculate that the excess risk for vanilla PCA is $20.6850$ where, in comparison, the extra risk for MoMPCA is only $0.3104$, proving its efficacy.

\begin{figure}[ht]
    \centering
    \includegraphics[height=0.25\textwidth,width=0.48\textwidth]{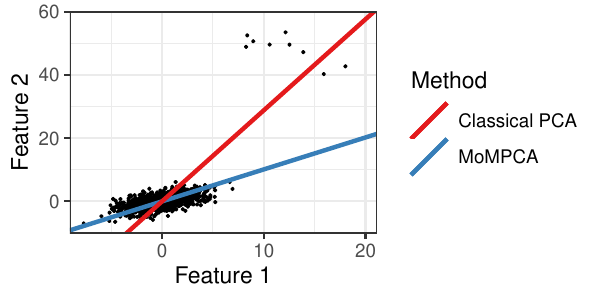}
    \hspace{-30pt}
    \caption{The first principal component found out by PCA and MoM PCA for the motivating example. Even in the presence of only $1\%$ outlying observations, the classical PCA can render spurious results while MoM PCA finds the direction of maximum variation efficiently.}
    \label{fig:motivate}
\end{figure}


\section{PCA with Median of Means}
\label{formulation}
\subsection{Problem Setup}
We will follow the same notations as described in section \ref{sec:introduction}. One should note that the PCA objective function \eqref{e1} can be written as $\int \xi_V(\bx) P_N(d\bx) = \frac{1}{n} \sum_{i=1}^N \xi_V(\bx_i)$, with $\xi_V(\bx) = \bx^\top (\bI-\bQ) \bx$. Here $P_N = \frac{1}{N} \sum_{i=1}^N \delta_{{\bx}_i}$ denotes the empirical measure, based on $\mathcal{X}$, i.e. $P_N(A) = \frac{1}{n} \sum_{i=1}^n \one\{{\bx}_i \in A\}$, for any set $A$. 

 MoM does not directly minimimze the empirical risk \eqref{e1}. Instead, it starts with a partitioning of the data into $L$ groups $B_1,\dots,B_L$ (i.e. $B_\ell \in 2^{[n]}$, $\cup_{\ell=1}^L B_\ell = [n]$ and $B_\ell \cap B_{\ell^\prime} = \emptyset$ for $\ell \neq \ell^\prime$), 
 each of which contains exactly $B$ many elements (repudiating a few observations when $L$ does not divide $n$). One can make a uniform at random assignment of the partitions or shuffle them \textit{on the run} \cite{lecue2020robust1}. 
 Let $P_{B_\ell}$ denote the empirical distribution of $\{{\bx}_i\}_{i \in B_\ell}$. For notational simplicity, we write, $\mu f = \int f d\mu$. We formulate the \textbf{M}edian \textbf{o}f \textbf{M}eans \textbf{P}rincipal \textbf{C}omponents \textbf{A}nalysis (MoMPCA) as the solution to the minimization problem of the following objective function,
 \begin{align}
     \text{MoM}_L^N(\xi_V) = \text{Median}\left(\left\{P_{B_\ell}\xi_V: \ell =1,\dots, L\right\}\right)        = \frac{1}{B} \text{Median}\left(\sum_{i\in B_1} \xi_V({\bx}_i) ,\dots, \sum_{i\in B_L} \xi_V({\bx}_i)\right) \label{e2}
 \end{align}
subject to $ V  \in \mathcal{V}_d$. A pictorial representation of objective \ref{e2} is given in Fig. \ref{fig:my_label}. Compared to their ERM counterparts, MoM estimators appear to exhibit stronger robustness as under mild assumptions, the outliers can affect only a subset of the partitions while the others remain free of outliers. Taking the median over partitions negates the effect of such contrived partitions, thus neutralizing the influence of outliers. Works like \cite{lecue2019learning,rodriguez2019breakdown} formally analyzes the robustness of the MoM estimators through breakdown points. 
\subsection{Optimization}
Optimizing \eqref{e2} is tractable via gradient-based methods. It is easy to see that we can write $\bQ=\bV \bV^\top$, where $\bV$ is a $p \times d$ real column-orthonormal matrix. Thus, making a variable transform, we can write, \eqref{e2} as 
\begin{equation}\label{e3}
\text{MoM}_L^N(g_V) = \frac{1}{B} \text{Median}\left(\sum_{i\in B_1} g_V({\bx}_i) ,\dots, \sum_{i\in B_L} g_V({\bx}_i)\right) 
\end{equation}
 where $g_V(\bx)= \bx^\top (I-\bV\bV^\top) \bx$. At the $t$-th iteration, let the median partition be $\ell_{\text{med}}^{(t)}$, i.e. $\ell_t\in [L]$ be such that $\text{MoM}_L^n(g_{\bV^{(t)}}) = \frac{1}{B}\sum_{i \in B_{\ell_t}} g_{\bV^{(t)}}({\bx}_i)$. According to \cite{lecue2020robust}, $\nabla \text{MoM}_L^N(g_{\bV^{(t)}}) = \frac{2}{B}\sum_{i \in B_{\ell_{\text{med}}^{(t)}}} {\bx}_i {\bx}_i^\top \bV^{(t)}$. However, after taking a gradient step, $\bV^{(t)} - \eta \frac{1}{B}\sum_{i \in B_{\ell_{\text{med}}^{(t)}}} {\bx}_i {\bx}_i^\top \bV^{(t)}$, the resultant matrix may not be column orthonormal. Following \cite{arora2012stochastic}, we apply a Gram-Schimtz algorithm to make the columns of this resultant matrix orthonormal. Let $\mathcal{P}_{\text{orth}}(\bV)$ denote the resultant matrix after orthonormalizing the columns of $\bV$. One can thus use the following gradient descent update:
 \[\bV^{(t+1)} \leftarrow \mathcal{P}_{\text{orth}} \bigg(\bV^{(t)} - \eta \frac{1}{B}\sum_{i \in B_{\ell_{\text{med}}^{(t)}}} {\bx}_i {\bx}_i^\top \bV^{(t)}\bigg).\]
 Algorithm \ref{algo} gives a formal description of the proposed gradient descent updates to optimize \eqref{e2}.
 \begin{figure*}
     \centering
    \begin{tikzpicture}
 

\node [draw,
    fill=SeaGreen, 
    minimum width=2cm, 
    minimum height=1.2cm, 
]  (data) at (0,0) {$\bX_1, \dots, \bX_N$};

\node[draw,
    minimum size=0.6cm,
    fill=Rhodamine!50,
   above right= 2cm and -1 cm of data
] (sum) {Random Partitions};

 
 
 
\node [draw,
    fill=Goldenrod,
    minimum width=2cm,
    minimum height=1.2cm,
    right=2cm of data,
    text width=2cm,
    align=center
]  (partition 2) {Partition 2: $\{\bX_i\}_{i \in B_2}$};

\node [draw,
    fill=Goldenrod,
    minimum width=2cm,
    minimum height=1.2cm,
    above =0.5 cm of partition 2,
    text width=2cm,
    align=center
]  (partition 1) {Partition 1: \\ $\{\bX_i\}_{i \in B_1}$};

\node [draw,
    fill=Goldenrod,
    minimum width=2cm,
    minimum height=1.2cm,
    below =1 cm of partition 2,
    text width=2cm,
    align=center
]  (partition L) {Partition L: $\{\bX_i\}_{i \in B_L}$};
 
 \node [draw,
    fill=Goldenrod,
    minimum width=2cm,
    minimum height=1.2cm,
    right =1cm of partition 1,
]  (f1) {$\sum\limits_{i \in B_1} \bX_i^\top (I - \bV \bV^\top) \bX_i$};

 \node [draw,
    fill=Goldenrod,
    minimum width=2cm,
    minimum height=1.2cm,
    right =1cm of partition 2,
]  (f2) {$\sum\limits_{i \in B_2} \bX_i^\top (I - \bV \bV^\top) \bX_i$};

 \node [draw,
    fill=Goldenrod,
    minimum width=2cm,
    minimum height=1.2cm,
    right =1cm of partition L,
]  (fL) {$\sum\limits_{i \in B_L} \bX_i^\top (I - \bV \bV^\top) \bX_i$};

\node [draw,
    fill=SpringGreen, 
    minimum width=2cm, 
    minimum height=1.2cm,
    right=1.5cm of  f2
] (mom) {$\text{MoM}_{L}^N(\xi_{\bV})$};
 \node[draw,
    minimum size=0.6cm,
    fill=Rhodamine!50,
   above right= 2cm and -1 cm of partition 2,
   text width=4cm,
   align=center
] (sum1) {Compute Empirical Risks \\ on Each Partition};
 \node[draw,
    minimum size=0.6cm,
    fill=Rhodamine!50,
   above right= 2cm and -1 cm of f2
] (sum2) {Compute Median};
 
\path (partition 2) -- (partition L) node [black, font=\huge, midway, sloped] {$\dots$};
\path (f2) -- (fL) node [black, font=\huge, midway, sloped] {$\dots$};
\draw[-stealth] (data.east) -- (partition 2.west)
    node[midway,above]{};
    
    \draw[-stealth] (data.east) -- (partition 1.west)
    node[midway,above]{};
    \draw[-stealth] (data.east) -- (partition L.west)
    node[midway,above]{};
     \draw[-stealth] (partition 1.east) -- (f1.west)
    node[midway,above]{};
     \draw[-stealth] (partition 2.east) -- (f2.west)
    node[midway,above]{};
    \draw[-stealth] (partition L.east) -- (fL.west)
    node[midway,above]{};
     \draw[-stealth] (f1.east) -- (mom.west)
    node[midway,above]{};
     \draw[-stealth] (f2.east) -- (mom.west)
    node[midway,above]{};
     \draw[-stealth] (fL.east) -- (mom.west)
    node[midway,above]{};
 
 
 
 

\end{tikzpicture}
     \caption{A pictorial illustration of median-of-means PCA objective.}
     \label{fig:my_label}
 \end{figure*}

\begin{algorithm}[t]
\caption{Median of Means Principal Component Analysis (MoMPCA)}\label{algo}
\begin{algorithmic}
\State \textbf{Input}: The dataset $\mathcal{X}=\{{\bx}_1,\dots,{\bx}_n\}$, $L$, $d$, $\eta$. 
\State \textbf{Output}:  $\widehat{\bQ}_{N,L}$.
\State Initialization: Randomly partition $[N]$ into $L$ many partitions of equal length. 
\State Compute the feature-wise median $\bmu$ of $\mathcal{X}$. Replace ${\bx}_i$ by ${\bx}_i - \bmu$.
\State Compute the first $d$ eigenvectors of $\sum_{i=1}^n {\bx}_i {\bx}_i^\top$ and stack them as columns of $V^{(0)}$.
\Repeat
\State Compute $v_\ell \leftarrow \sum_{i \in B_\ell} {\bx}_i^\top (I- \bV^{(t)}(\bV^{(t)})^\top){\bx}_i$ 
\State Compute the median class $\ell_{\text{med}}$, such that $v_{\ell_{\text{med}}} = \text{median}(\{v_\ell\}_{\ell=1}^L)$
\[\bV^{(t+1)} \leftarrow \mathcal{P}_{\text{orth}} \bigg(\bV^{(t)} -  \frac{\eta}{B}\sum_{i \in B_{\ell_{\text{med}}^{(t)}}} {\bx}_i {\bx}_i^\top \bV^{(t)}\bigg).\]
 \Until{objective $\bV^{(t)}$ converges}
 \State Compute $\widehat{\bQ}_{N,L} \leftarrow \bV^{(t)}(\bV^{(t)})^\top $.
\end{algorithmic}
\end{algorithm}

 \subsection{Centering} One should note that the aforementioned formulation is for centered observations. If the data is not centered, one must first center the observations by subtracting a robust estimate of location from each point. This is to emphasize that under outlier contamination, the sample mean may not be a reasonable estimate for the measure of central tendency for the data, and one has to resort to other robust estimates of the location, such as component-wise median or data depth based measures, to center the data.

\subsection{Time Complexity}
It is a well known fact that in a dataset with $n$ number of datapoints and $p$ many features, the time complexity of vanilla PCA is $O(\min \{n^3,p^3\})$. For our method, we note that, the computation of $v_\ell$'s require $\mathcal{O}(k p^2 (p \vee b))$ time. This is because, $v_\ell = \text{trace}(\bV^{(t)} (\bV^{(t)})^\top \sum_{i \in B_\ell} {\bx}_i {\bx}_i^\top)$ and requires $\mathcal{O}(p^2 b + p^2 d + p^3 + p) = \mathcal{O}(p^2 (b\vee p) )$, taking into account all the matrix multiplications and trace computation. The median computation takes $\mathcal{O}(L)$ time. For the computation of $\bV^{(t+1)}$, we note that,  the gradient computation takes $\mathcal{O}(b p^2 + p^2 d + p^2 ) = \mathcal{O}(p^2 b) $ time and the Gram-Schmidt orthogonalization takes $\mathcal{O}(pd^2)$ time, making a total of $\mathcal{O}(p^2b + pd^2)$ time. Thus the per iteration complexity of the loop is $\mathcal{O}(p^2 (b\vee p) + p^2b + pd^2 + L) $ time. In general, $b >p$ and the per iteration complexity becomes $\mathcal{O}(p^2 b + p d^2 + L)$.

\section{Theoretical Properties}
\label{theory}

\subsection{Notation and Setup}
Instead of performing the analysis on a real vector space as was done in our recent MoM-based robust clustering work \cite{paul2021uniform}, here we develop our proofs for a (real) Hilbert space in context to robust PCA. Suppose the data is observed in a real separable Hilbert space $\mathcal{H}$, i.e., $\{X_i\}_{i \in [n]} \subset \mathcal{H}$. Let the norm in $\mathcal{H}$ be denoted by $\|\cdot\|$. A linear operator $L : \Hilbert \to \Hilbert$, is said to be Hilbert-Schmidt if $\sum_{i \ge 1} \|L e_i\|^2 < \infty$ and the sum is independent of the chosen orthonormal basis $\{e_i\}_{i \ge 1}$ (of $\Hilbert$). Let $HS(\Hilbert)$ be the set of all Hilbert-Schmidt linear operators. An inner product on $HS(\Hilbert)$ is defined as, $\langle L_1, L_2 \rangle = \sum_{i \ge 1} \langle L_1 e_i, L_2 e_i \rangle$. It can be shown that $HS(\Hilbert)$ is also a separable (real) Hilbert space. The trace of a linear operator is defined as $\text{tr} (L) = \sum_{i \ge 1} \langle L e_i , e_i \rangle$.

For any $x, y \in \Hilbert \setminus \{0\}$, the outer product operator $x \otimes y$ is defined as $(x \otimes y)(z) = \langle y, z \rangle x $. This outer product satisfies the following properties.
\begin{itemize}
    \item $\|x \otimes y \|_{HS(\Hilbert)} = \|x\| \|y\|$.
    \item $\text{tr}(x \otimes y) = \langle x , y \rangle$.
    \item $\langle L , x \otimes y \rangle_{HS(\Hilbert)} = \langle L y , x \rangle$
\end{itemize}
Recall that if $U$ is an orthogonal projector operator in $\Hilbert$, then  $U^2 = U$. Moreover, $\|U x\|^2  = \langle x, U x \rangle \le \|x\|^2$ and $\langle x \otimes y , U \rangle_{HS(\Hilbert)} = \langle U x , U y \rangle $. $U$ has rank $d< \infty$, iff it is Hilbert-Schmidt and 
\[\|U\|_{HS(\Hilbert)}^2 = d ; \quad \text{tr}(U) = d.\]

Suppose $V$ is a closed subspace of $\Hilbert$, then ${\bPi}_{V}$ denotes the (unique) orthogonal projector onto $V$. $V^\perp$ represents the orthogonal complement of $V$.

We aim to find the ``best" subspace of dimension $d$ in this Hilbert space that minimizes the reconstruction error. Hence we consider the following objective, similar to equation \ref{e2}

We assume that the dataset can be split into two categories: the set of inliers ($\I$) and the set of outliers ($\cO$), i.e., $[N]=\I \cup \cO$. We will assume the following in the data generation process.

\begin{ass}\label{a1}
$\{{\bx}_i\}_{i \in \I}$ are independently and identically distributed (i.i.d) according to the distribution $P$.
\end{ass}
\begin{ass}
\label{a2}
$\mathbb{E}_{{\bx} \sim P}({\bx}) = \mathbf{0}$ and $\mu_4 =  \E_{X \sim P}\|{\bx}\|^4  < \infty$.
\end{ass}

\begin{ass}
\label{a3}
$\exists \, \eta >0$, such that $L>(2+\eta)|\cO|$.
\end{ass}

We emphasize that we do not make any assumptions about the distributions of the outlying observations. They are even allowed to be dependent and may even come from some heavy-tailed distributions. Assumptions A\ref{a1} and A\ref{a2} state that the inlying observations are independently and identically generated from some distribution with zero mean and a finite fourth moment. To see the significance of assumption A\ref{a3}, one should note that by the pigeonhole principle, (strictly) less than half of the partitions may contain an outlying observation. More than half of the partitions, thus, do not contain an outlier. Since the median is driven by the majority half of the partitions, naturally by A\ref{a3}, since the median is only affected by the objective function value of the majority half of these partitions, the MoM estimates can be expected only to be based on the inlying observations. We note that our analysis does not require ``$L> 4|\cO|$" as in \cite{lecue2020robustcl} but only requires that $L>(2+\eta)|\cO|$, which is a much weaker condition. Moreover, opposed to \cite{lecue2020robustcl}, A\ref{a3} has nice practical interpretations.

Let 
\begin{align*}
    \xi_V(\bx)  = \|\bx - {\bPi}_V \bx\|^2 & = \|{\bPi}_{V^\perp} \bx\|^2 = \langle {\bPi}_{V^\perp} \bx, {\bPi}_{V^\perp} \bx \rangle  = \langle {\bPi}_{V^\perp},  \bx \otimes \bx \rangle_{HS(\Hilbert)}.
\end{align*}
For notational simplicity, for any function $g$, We define the operator, $\text{MoM}_L^N (\cdot)$ as follows:
\[\text{MoM}_L^N (g) = \frac{1}{B} \text{Median}\left(\sum_{i\in B_1} g({\bx}_i) ,\dots, \sum_{i\in B_L} g({\bx}_i)\right).\]

 It is easy to observe that the MoMPCA problem in section \ref{formulation} can be restated as, 
 \[\min_{V \in  \mathcal{V}_d} \text{MoM}_L^N (\xi_V).\]
Here, $\mathcal{V}_d$ denotes the set of all $d$-dimensional subspaces of $\Hilbert$. Let the unique minimizer to the above problem be $\hat{V}_d = \argmin_{V \in  \mathcal{V}_d} \text{MoM}_L^N (\xi_V)$. Let $V^\ast_d = \argmin_{V \in  \mathcal{V}_d} P \xi_V$ be the global population minimizer. The excess risk of the estimate, $\hat{V}_d$ is given by, 
\begin{align}\label{test_error}
    \mathfrak{R}(\hat{V}_d) = P \xi_{\hat{V}_d} - P \xi_{V^\ast_d} = \int \xi_{\hat{V}_d} dP - \inf_{V \in  \mathcal{V}_d}\int \xi_{V} dP
\end{align}

We note that  $P \xi_{V^\ast_d} \le P \xi_{V}$ and $\text{MoM}_L^N (\xi_{\hat{V}_d}) \le \text{MoM}_L^N (\xi_{V})$, for all $V  \in \mathcal{V}_d$. Thus, 
\begin{align}\small
     |P \xi_{\hat{V}_d} - P \xi_{V^\ast_d} | 
    = & P\xi_{\hat{V}_d}-P \xi_{V^\ast_d} \nonumber \\
    = & (P\xi_{\hat{V}_d} - \text{MoM}_L^N (\xi_{\hat{V}_d})) + (\text{MoM}_L^N (\xi_{\hat{V}_d}) \nonumber \\
    & - \text{MoM}_L^N (\xi_{V^\ast_d}) + \left( \text{MoM}_L^N ( \xi_{V^\ast_d}) - P  \xi_{V^\ast_d}\right) \nonumber \\
     \le & (P\xi_{\hat{V}_d} - \text{MoM}_L^N (\xi_{\hat{V}_d})) + \left( \text{MoM}_L^N (\xi_{V^\ast_d}) - P \xi_{V^\ast_d}\right) \nonumber\\
    \le & 2 \sup_{V \in  \mathcal{V}_d} |\text{MoM}^N_L (\xi_{V}) - P \xi_V |. \label{sup bd}
\end{align}
It is thus, enough to prove bounds on the uniform deviation $\sup_{V \in  \mathcal{V}_d} |\text{MoM}^N_L (\xi_{V}) - P \xi_V |$. Towards that, we will first derive bounds on the Rademacher complexity in the following section, followed by our main results.

\subsection{Bounds on the Rademacher Complexity}
For our theoretical understanding of MoMPCA, we need to compute the Rademacher complexity of the function class $\Xi_d = \{\xi_V: V  \in \mathcal{V}_d\}$. We first recall the definition of Rademacher complexity. Let ${\by}_1,\dots,{\by}_m \overset{i.i.d.}{\sim}P$ be random variables in some space $\mathcal{S}$. Suppose $\mathcal{F} \subseteq {\Real}^\mathcal{S}$ be a class of functions from $\mathcal{S}$ to $\Real$ and let  $\sigma_1,\dots, \sigma_m$ be i.i.d. Rademacher random variables. The empirical Rademacher complexity, based on $\mathcal{Y}=\{{\by}_1,\dots,{\by}_m\}$ is defined as:
\[\hat{R}_{\mathcal{Y}}(\mathcal{F}) = \frac{1}{m} \E \left( \sup_{ f \in \mathcal{F} }\sum_{i=1}^m \sigma_i f({\by}_i)\bigg|\mathcal{Y}\right).\]
Similarly, the population Rademacher complexity is defined as 
\[R_{m}(\mathcal{F}) = \E(R_{\mathcal{Y}}(\mathcal{F})) = \frac{1}{m} \E\sup_{ f \in \mathcal{F} }\sum_{i=1}^m \sigma_i f({\by}_i).\]
We now compute the Rademacher complexity of the function class $\{\xi_V: V  \in \mathcal{V}_d\}$ in Theorem \ref{th1}.


\begin{theorem}\label{th1}
Suppose $\Xi_d = \{\xi_V: V   \in \mathcal{V}_d\}$. Let ${\by}_1,\dots,{\by}_m \overset{\text{i.i.d}}{\sim} P$, with $P$ satisfying Assumption \ref{a2}. Then, \(R_m(\Xi_d)  \le \sqrt{\frac{d\mu_4}{m}}.\)
\end{theorem}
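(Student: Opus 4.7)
The plan is to linearize $f_Q$ in $Q$, decouple the supremum over $Q$ from the Rademacher sum via Cauchy--Schwarz, and then handle the remaining Rademacher-weighted sum of outer products with a standard second-moment computation.

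First, I would rewrite the summand using the Frobenius (trace) inner product. Since $f_Q(\bx) = \bx^\top (I-Q)\bx = \mathrm{tr}\bigl((I-Q)\bx\bx^\top\bigr) = \langle I-Q,\, \bx\bx^\top\rangle$, we get
\begin{equation*}
\sum_{i=1}^m \sigma_i f_Q(\bY_i) \;=\; \Bigl\langle I-Q,\ \sum_{i=1}^m \sigma_i \bY_i\bY_i^\top \Bigr\rangle.
\end{equation*}
Let $S_m := \sum_{i=1}^m \sigma_i \bY_i\bY_i^\top$. The point of this reformulation is that the dependence on $Q$ now sits in a single linear factor $I-Q$, so Cauchy--Schwarz in the Frobenius inner product yields
\begin{equation*}
\sup_{Q\in\mathcal{Q}_d}\sum_{i=1}^m \sigma_i f_Q(\bY_i) \;\le\; \sup_{Q\in\mathcal{Q}_d} \|I-Q\|_F \cdot \|S_m\|_F.
\end{equation*}

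Second, I would compute the deterministic factor. Because $Q$ is a rank-$d$ orthogonal projection in $\Real^p$, $I-Q$ is a rank-$(p-d)$ orthogonal projection, so $(I-Q)^2 = I-Q$ and $\|I-Q\|_F^2 = \mathrm{tr}(I-Q) = p-d$. Plugging this in and applying Jensen's inequality to pull the expectation inside the square root gives
\begin{equation*}
R_m(\mathcal{F}_d) \;\le\; \frac{\sqrt{p-d}}{m}\, \E\|S_m\|_F \;\le\; \frac{\sqrt{p-d}}{m}\,\sqrt{\E\|S_m\|_F^2}.
\end{equation*}

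Third, I would evaluate $\E\|S_m\|_F^2$. Conditioning on the $\bY_i$'s and expanding,
\begin{equation*}
\E_\sigma \|S_m\|_F^2 \;=\; \sum_{i,j} \E[\sigma_i\sigma_j]\, \langle \bY_i\bY_i^\top,\bY_j\bY_j^\top\rangle \;=\; \sum_{i=1}^m \|\bY_i\bY_i^\top\|_F^2 \;=\; \sum_{i=1}^m \|\bY_i\|_2^4,
\end{equation*}
where the cross terms vanish by independence and zero-mean of the Rademachers. Taking expectation over the i.i.d.\ $\bY_i \sim P$ and invoking Assumption \ref{a2} gives $\E\|S_m\|_F^2 = m\mu_4$. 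Combining with the bound above yields $R_m(\mathcal{F}_d) \le \sqrt{(p-d)\mu_4/m}$, which is exactly the claim.

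There is no real obstacle here; the only delicate step is the Frobenius-norm computation for $I-Q$ (which crucially uses idempotence rather than a naive $\|I\|_F + \|Q\|_F$ triangle bound that would give $p$ instead of $p-d$) and making sure the Rademacher cross terms drop out so that the fourth-moment assumption is enough.
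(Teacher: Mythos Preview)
Your proposal is correct and follows essentially the same argument as the paper: linearize $f_Q$ via the Frobenius inner product, apply Cauchy--Schwarz, use idempotence of $I-Q$ to get the $\sqrt{p-d}$ factor, then Jensen and Rademacher orthogonality to reduce to $\sqrt{m\mu_4}$. The only cosmetic difference is that the paper first bounds the empirical Rademacher complexity and then averages over the sample (applying Jensen twice), whereas you take the full expectation in one shot; the steps and ingredients are otherwise identical.
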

\begin{proof}
Suppose ${\by}_1,\dots,{\by}_m \overset{\text{i.i.d}}{\sim} P$ and let $\mathcal{Y}=\{{\by}_1,\dots,{\by}_m\}$. The empirical Rademacher complexity is thus, given by, 
\begingroup
\allowdisplaybreaks
\begin{align}
    \widehat{R}_{\Xi_d}(Q) & = \frac{1}{m} \E_{\bsigma} \sup_{ V  \in \mathcal{V}_d}\sum_{i=1}^m \sigma_i \xi_V({\by}_i) \nonumber\\
    & = \frac{1}{m} \E_{\bsigma} \sup_{ V  \in \mathcal{V}_d}\sum_{i=1}^m \sigma_i ( \|{\by}_i\|_2^2 - \langle {\bPi}_{V},  {\by}_i \otimes {\by}_i \rangle_{HS(\Hilbert)}) \nonumber \\
   & = \frac{1}{m} \E_{\bsigma} \sup_{ V  \in \mathcal{V}_d} \left\langle {\bPi}_{V},  \sum_{i=1}^m \sigma_i {\by}_i \otimes {\by}_i \right\rangle_{HS(\Hilbert)} \nonumber \\
   & \le  \frac{1}{m} \E_{\bsigma} \sup_{ V  \in \mathcal{V}_d}  \|{\bPi}_{V}\|_{HS(\Hilbert)} \left\|\sum_{i=1}^m \sigma_i {\by}_i \otimes {\by}_i\right\| _{HS(\Hilbert)} \label{rad1} \\
   & \le  \frac{\sqrt{d}}{m} \E_{\bsigma}  \left\|\sum_{i=1}^m \sigma_i {\by}_i \otimes {\by}_i\right\| _{HS(\Hilbert)} \nonumber\\
    & \le  \frac{\sqrt{d}}{m} \sqrt{\E_{\bsigma}  \left\|\sum_{i=1}^m \sigma_i {\by}_i \otimes {\by}_i\right\| _{HS(\Hilbert)}^2 }\label{rad2}\\
    & =  \frac{\sqrt{d}}{m} \sqrt{  \sum_{i=1}^m \left\|  {\by}_i \otimes {\by}_i\right\| _{HS(\Hilbert)}^2 }\nonumber\\
     & =  \frac{\sqrt{d}}{m} \sqrt{  \sum_{i=1}^m \|{\by}_i\|^4 }\nonumber
\end{align}
\endgroup

Here, inequalities \eqref{rad1} and \eqref{rad2} follow from appealing to Cauchy-Schwartz and Jensens's inequalities, respectively. Thus, 
\begingroup
\allowdisplaybreaks
\begin{align}
   R_m(\Xi_d) & = \E_{\mathcal{Y}} \widehat{R}_{\mathcal{X}}(\Xi_d) \nonumber \\
   &= \frac{1}{m} \sqrt{d} \, \E \left[\sqrt{    \sum_{i=1}^m \|{\by}_i\|^4} \right] \nonumber \\
  & \le \frac{1}{m} \sqrt{d}  \sqrt{    \sum_{i=1}^m\E \|{\by}_i\|^4} \label{rad3}\\
  & =  \sqrt{\frac{d\mu_4}{m}}.\nonumber
\end{align}
\endgroup
Inequality \eqref{rad3} follows from Jensen's inequality.
\end{proof}

\subsection{Uniform Concentration Bounds}
We are now ready to state and prove the uniform concentration bound result in Theorem \ref{th2}.  Theorem \ref{th2} asserts that with a high probability, $\sup_{ V  \in \mathcal{V}_d} |\text{MoM}_L^N (\xi_V) - P \xi_V | \lesssim  \sqrt{\frac{L}{N}} +  \frac{\sqrt{|\I|}}{N}$ with a very high probability.
 \begin{theorem}\label{th2}
Under A\ref{a1}--\ref{a3}, and $N>L$, with probability at least, $1-2 e^{-2L \left(\frac{2}{4+\eta} - \frac{|\cO|}{L}\right)^2}$,
\[\sup_{ V  \in \mathcal{V}_d} |\text{MoM}_L^N (\xi_V) - P \xi_V | \le C \sqrt{\E \|{\bx}\|^4}\left( \sqrt{\frac{L}{N}} +  \frac{\sqrt{d|\I|}}{N}\right),\]
with $C= 2\max\left\{\sqrt{\frac{8(4+\eta)}{\eta}}, \frac{16 (4+\eta)}{\eta}\right\}$.
\end{theorem}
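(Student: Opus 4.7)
The plan is to follow the standard MoM template: first reduce the uniform deviation event to a statement about ``deviant'' clean blocks, then control that event by McDiarmid-type concentration around its expectation, and finally bound the expectation via symmetrization, contraction, and Theorem~\ref{th1}.

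The first step will be a pigeon-hole reduction. If $\text{MoM}_L^N(f_Q) - Pf_Q > x$ then, by definition of the median, more than $L/2$ block averages $P_{B_\ell}f_Q$ exceed $Pf_Q + x$; at most $|\cO|$ of these can correspond to blocks containing an outlier, so at least $L/2 - |\cO|$ \emph{clean} blocks (those with $B_\ell \subseteq \I$) must satisfy $P_{B_\ell}f_Q - Pf_Q > x$. The same argument handles the lower tail, so after taking a supremum over $Q$,
\[
\sup_{Q \in \mathcal{Q}_d} |\text{MoM}_L^N(f_Q) - Pf_Q| > x \;\Longrightarrow\; \sup_{Q \in \mathcal{Q}_d} \sum_{\ell \in \cL} \one[|P_{B_\ell}f_Q - Pf_Q| > x] > \tfrac{L}{2} - |\cO|,
\]
where $\cL$ indexes the (at least $L-|\cO|$) clean blocks. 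Let $\Phi$ denote the right-hand supremum. Since by Assumption~\ref{a1} the clean blocks are independent, and perturbing a single clean block changes at most one indicator (hence $\Phi$) by at most $1$, McDiarmid's inequality gives
\[
\sP(\Phi > \E\Phi + t) \le \exp(-2t^2/(L-|\cO|)) \le \exp(-2t^2/L).
\]
Choosing $x$ so that $\E\Phi \le L\eta/(2(4+\eta))$ forces $t = L/2 - |\cO| - \E\Phi$ to satisfy $t/L \ge 2/(4+\eta) - |\cO|/L$, yielding the claimed exponent (with a factor of $2$ for the two-sided union).

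The bulk of the work lies in bounding $\E\Phi$. I would first dominate the indicator by the $1/x$-Lipschitz surrogate $\phi_x(y) = \min(|y|/x, 1)$ (which satisfies $\phi_x(0)=0$) and split
\[
\E\Phi \le \sup_Q \sum_{\ell \in \cL} \E\phi_x(W_\ell(Q)) + \E\sup_Q \sum_{\ell \in \cL}\{\phi_x(W_\ell(Q)) - \E\phi_x(W_\ell(Q))\},
\]
with $W_\ell(Q) = P_{B_\ell}f_Q - Pf_Q$. For the first summand, Cauchy--Schwarz together with the block-variance estimate $\mathrm{Var}(W_\ell(Q)) \le C(P)/B$ (which one verifies by expanding $\bx^\top(I-Q)\bx$ in an orthonormal basis of the range of $I-Q$) gives a bound of order $L\sqrt{C(P)L/N}/x$, contributing the $\sqrt{L/N}$ term in the theorem. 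For the second summand, standard symmetrization followed by the Ledoux--Talagrand contraction principle (valid because $\phi_x$ is $1/x$-Lipschitz with $\phi_x(0)=0$) reduces matters to controlling $(4/x)\,\E\sup_Q \sum_{\ell \in \cL} \sigma_\ell W_\ell(Q)$. Writing this sample-wise as $(4/(xB))\,\E\sup_Q \sum_{i \in \tilde{\I}} \sigma_{\ell(i)}\{f_Q(\bX_i) - Pf_Q\}$ with $\tilde{\I} = \cup_{\ell \in \cL}B_\ell \subseteq \I$, I would perform a second symmetrization: introduce independent copies $(\bX'_i)$ and fresh iid Rademachers $\epsilon_i$. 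Because $\sigma_{\ell(i)}\epsilon_i$ is marginally iid Rademacher for each fixed $\sigma$, the block-correlated process collapses to an iid sample Rademacher process $\E\sup_Q |\sum_{i \in \tilde{\I}}\epsilon'_i f_Q(\bX_i)| = |\tilde{\I}|\,R_{|\tilde{\I}|}(\F_d)$, and Theorem~\ref{th1} then furnishes the $\sqrt{|\I|}/N$ term.

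The main obstacle is this final decoupling step: the Rademacher signs $\sigma_{\ell(i)}$ are block-constant rather than iid across $i$, and one has to verify that the resulting process is nonetheless dominated by the iid sample Rademacher complexity quantified in Theorem~\ref{th1}. A secondary but fiddly issue is the constant bookkeeping: the threshold $x$ must be chosen just large enough that the two contributions to $\E\Phi$ together do not exceed $L\eta/(2(4+\eta))$, which is what produces the exact exponent $2L(2/(4+\eta) - |\cO|/L)^2$ and the stated constant $C$.
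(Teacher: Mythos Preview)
Your outline is essentially the paper's proof: the same pigeon-hole reduction, the same Lipschitz surrogate for the indicator, the same expectation/fluctuation split, the same concentration-plus-contraction for the fluctuation, and exactly the double-symmetrization trick you flag as the ``main obstacle'' (introduce a ghost sample, then fresh i.i.d.\ Rademachers $\xi_i$, and observe that $\sigma_{\ell(i)}\xi_i$ are jointly i.i.d.\ Rademacher) to collapse the block-constant signs into sample-wise signs before invoking Theorem~\ref{th1}. So your worry about that step is unwarranted---it goes through for precisely the reason you give.

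The one place you diverge is the bound on the expectation term. You propose $\E\phi_x(W_\ell)\le \E|W_\ell|/x\le \sqrt{C(P)/B}/x$ via Cauchy--Schwarz, whereas the paper uses a surrogate $\varphi$ sandwiched between $\one\{t\ge 2\}$ and $\one\{t\ge 1\}$ and then applies Chebyshev, obtaining $\E\varphi(2W_\ell/\epsilon)\le \sP(|W_\ell|>\epsilon/2)\le 4C(P)/(B\epsilon^2)$. Both are correct, but the $1/\epsilon^2$ versus $1/x$ dependence means your choice of threshold yields a first constant of order $(4+\eta)\sqrt{C(P)}/\eta$ rather than the stated $\sqrt{8(4+\eta)C(P)/\eta}$; i.e., your route proves the theorem with a worse constant in front of the $\sqrt{L/N}$ term. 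If you want the exact $C$ in the statement, switch your surrogate to one dominated by $\one\{|y|>x/2\}$ and use Chebyshev instead of Cauchy--Schwarz at that step.
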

\begin{proof} Suppose $\epsilon>0$. We will first bound the probability of $\sup_{ V  \in \mathcal{V}_d} |\text{MoM}_L^N (\xi_V) - P \xi_V |> \epsilon$. To do so, we will individually bound the probabilities of the events, $\sup_{ V  \in \mathcal{V}_d}( \text{MoM}_L^N (\xi_V) - P \xi_V) >\epsilon$ and $\sup_{ V  \in \mathcal{V}_d}  ( P \xi_V - \text{MoM}_L^N (\xi_V)) > \epsilon$. We note that if $\sup_{ V  \in \mathcal{V}_d}\sum_{\ell = 1}^L \one\left\{(P-P_{B_\ell})\xi_V > \epsilon\right\} > \frac{L}{2}$, then, $\sup_{ V  \in \mathcal{V}_d}  ( P \xi_V - \text{MoM}_L^N (\xi_V)) > \epsilon$. Here $\one\{\cdot\}$ denote the indicator function. Let $\varphi(t) = (t-1) \one\{1 \le t \le 2\} + \one\{t >2\}$. Clearly,
\begin{equation}
    \label{eq6}
    \one\{t \ge 2\} \le \varphi(t) \le \one\{t \ge 1\}.
\end{equation}We observe that, 
\begingroup
\allowdisplaybreaks
\begin{align}
   & \sup_{ V  \in \mathcal{V}_d}\sum_{\ell = 1}^L \one\left\{(P-P_{B_\ell})\xi_V > \epsilon\right\}\nonumber\\
    \le & \sup_{ V  \in \mathcal{V}_d}\sum_{\ell \in \cL} \one\left\{(P-P_{B_\ell})\xi_V > \epsilon\right\} + |\cO|\nonumber\\
    \le & \sup_{ V  \in \mathcal{V}_d}\sum_{\ell \in \cL}  \varphi\left(\frac{2(P-P_{B_\ell})\xi_V}{\epsilon}\right) + |\cO|\nonumber\\
    \le & \sup_{ V  \in \mathcal{V}_d}\sum_{\ell \in \cL}  \E \varphi\left(\frac{2(P-P_{B_\ell})\xi_V}{\epsilon}\right)  + |\cO| +\sup_{ V  \in \mathcal{V}_d}\sum_{\ell \in \cL}  \bigg[ \varphi\left(\frac{2(P-P_{B_\ell})\xi_V}{\epsilon}\right)   - \E \varphi\left(\frac{2(P-P_{B_\ell})\xi_V}{\epsilon}\right)\bigg]. \label{eq1}
\end{align}
\endgroup
To bound $\sup_{ V  \in \mathcal{V}_d}\sum_{\ell = 1}^L \one\left\{(P-P_{B_\ell})\xi_V > \epsilon\right\}$, we will first bound the quantity, $\E \varphi\left(\frac{2(P-P_{B_\ell})\xi_V}{\epsilon}\right)$. We observe that, 
\begingroup
\allowdisplaybreaks
\begin{align}
\small 
  \E \varphi\left(\frac{2(P-P_{B_\ell})\xi_V}{\epsilon}\right) 
  \le & \E \left[ \one\left\{\frac{2(P-P_{B_\ell})\xi_V}{\epsilon} > 1 \right\}\right] \nonumber \\
  = & \sP \left[ (P-P_{B_\ell})\xi_V>\frac{\epsilon}{2} \right] \nonumber \\ 
  \le & \frac{4}{\epsilon^2} \text{Var}\left((P-P_{B_\ell})\xi_V\right) \label{eq3} \\
  = & \frac{4}{\epsilon^2} \text{Var}\left(P_{B_\ell}\xi_V\right) \nonumber \\
  = & \frac{4}{B\epsilon^2} \text{Var}\left(\|{\bPi}_{V^\perp} {\bx}\|^2\right) \nonumber \\
\le & \frac{4}{B\epsilon^2} \E \|{\bPi}_{V^\perp} {\bx}\|^4 \nonumber \\
\le & \frac{4}{B\epsilon^2} \E \| {\bx}\|^4 \nonumber
\end{align} 
\endgroup
Here equation \eqref{eq3} follows from Chebyshev's inequality. 
We now concentrate on bounding the term $\sup_{ V  \in \mathcal{V}_d}\sum_{\ell \in \cL}  \bigg[ \varphi\left(\frac{2(P-P_{B_\ell})\xi_V}{\epsilon}\right)   - \E \varphi\left(\frac{2(P-P_{B_\ell})\xi_V}{\epsilon}\right)\bigg]$. Appealing to Theorem 26.5 of \cite{shalev2014understanding} we observe that, with probability at least $1-e^{-2L \delta^2}$, $\forall  V  \in \mathcal{V}_d$,
\begin{align}
  \frac{1}{L}\sum_{\ell \in \cL}   \varphi\left(\frac{2(P-P_{B_\ell})\xi_V}{\epsilon}\right) 
  \le & \E\left[\frac{1}{L}\sum_{\ell \in \cL}  \varphi\left(\frac{2(P-P_{B_\ell})\xi_V}{\epsilon}\right) \right] + 2\E\left[\sup_{ V  \in \mathcal{V}_d}\frac{1}{L}\sum_{\ell \in \cL}\sigma_\ell  \varphi\left(\frac{2(P-P_{B_\ell})\xi_V}{\epsilon}\right) \right] + \delta.  \label{eq5}
\end{align}
Here $\{\sigma_\ell\}_{\ell \in \mathcal{L}}$ are independent Rademacher variables. Suppose that $\{\xi_i\}_{i=1}^n$ are independent Rademacher random variables and independent of  $\{\sigma_\ell\}_{\ell \in \mathcal{L}}$. From equation \eqref{eq5}, we get,  
\begingroup
\allowdisplaybreaks
\begin{align}
     & \frac{1}{L}\sup_{ V  \in \mathcal{V}_d}\sum_{\ell \in \cL}  \bigg[ \varphi\left(\frac{2(P-P_{B_\ell})\xi_V}{\epsilon}\right)  - \E \varphi\left(\frac{2(P-P_{B_\ell})\xi_V}{\epsilon}\right)\bigg] \nonumber\\
     \le & 2\E\left[\sup_{ V  \in \mathcal{V}_d}\frac{1}{L}\sum_{\ell \in \cL}\sigma_\ell  \varphi\left(\frac{2(P-P_{B_\ell})\xi_V}{\epsilon}\right) \right] + \delta \nonumber \\
     \le & \frac{4}{L \epsilon}\E\left[\sup_{ V  \in \mathcal{V}_d}\sum_{\ell \in \cL}\sigma_\ell  (P-P_{B_\ell})\xi_V \right]+ \delta. \label{eq7} 
     \end{align}
     Equation \eqref{eq7} follows from the fact that $\varphi(\cdot)$ is 1-Lipschitz and appealing to Lemma 26.9 of \cite{shalev2014understanding}. We now introduce a phantom sample $\mathcal{X}^\prime=\{{\bx}_i^\prime, \dots, {\bx}_n^\prime\}$, which are i.i.d. and follows the law $P$, independent of $\mathcal{X}$. Thus, equation \eqref{eq7} further equals the following quantity.
     \begin{align}
     = & \frac{4}{L \epsilon}\E\left[\sup_{ V  \in \mathcal{V}_d}\sum_{\ell \in \cL}\sigma_\ell  \E_{\mathcal{X}^\prime}\left((P^\prime_{B_\ell}-P_{B_\ell})\xi_V\right) \right]+ \delta \nonumber \\
     \le & \frac{4}{L \epsilon}\E\left[\sup_{ V  \in \mathcal{V}_d}\sum_{\ell \in \cL}\sigma_\ell  (P^\prime_{B_\ell}-P_{B_\ell})\xi_V \right]+ \delta \nonumber \\
     = & \frac{4}{L \epsilon}\E\left[\sup_{ V  \in \mathcal{V}_d}\sum_{\ell \in \cL}\sigma_\ell  \frac{1}{B}\sum_{i \in B_\ell}(\xi_V({\bx}_i^\prime)-\xi_V({\bx}_i)) \right]+ \delta \nonumber \\
     = & \frac{4}{B L \epsilon}\E\left[\sup_{ V  \in \mathcal{V}_d}\sum_{\ell \in \cL}\sigma_\ell  \sum_{i \in B_\ell} \xi_i(\xi_V({\bx}_i^\prime)-\xi_V({\bx}_i)) \right]+ \delta \label{eq8} \\
     = & \frac{4}{N \epsilon}\E\left[\sup_{ V  \in \mathcal{V}_d}\sum_{\ell \in \cL}  \sum_{i \in B_\ell} \sigma_\ell \xi_i(\xi_V({\bx}_i^\prime)-\xi_V({\bx}_i)) \right]+ \delta \nonumber \\
     \le & \frac{4}{N \epsilon}\E\left[\sup_{ V  \in \mathcal{V}_d}\sum_{\ell \in \cL}  \sum_{i \in B_\ell} \sigma_\ell \xi_i(\xi_V({\bx}_i^\prime)+\xi_V({\bx}_i)) \right]+ \delta \nonumber \\
     = & \frac{4}{N \epsilon}\E\left[\sup_{ V  \in \mathcal{V}_d} \sum_{i \in \J} \gamma_i (\xi_V({\bx}_i^\prime)+\xi_V({\bx}_i)) \right] \label{eq9} \\
     = & \frac{8}{N \epsilon}\E\left[\sup_{ V  \in \mathcal{V}_d} \sum_{i \in \J} \gamma_i \xi_V({\bx}_i) \right]+ \delta \label{eq10} \\
     \le & \frac{8}{N \epsilon} \sqrt{d\mu_4 |\mathcal{J}|} + \delta \nonumber \\
     \le & \frac{8}{N \epsilon} \sqrt{d\mu_4 |\mathcal{I}|} + \delta. \label{eq11}
\end{align}
\endgroup
Equation \eqref{eq8} follows from observing that  $(\xi_V({\bx}_i^\prime)-\xi_V({\bx}_i)) \overset{d}{=} \xi_i(\xi_V({\bx}_i^\prime)-\xi_V({\bx}_i))$. In equation \eqref{eq9}, $\{\gamma_i\}_{i \in \mathcal{J}}$ are independent Rademacher random variables owing to their construction. Equation \eqref{eq10} can be derived from Theorem \ref{th1}.
Thus, combining equations \eqref{eq5}, \eqref{eq7}, and \eqref{eq11}, we conclude that,  with probability of at least $1-e^{-2L \delta^2}$, 
\begin{align}
    \sup_{ V  \in \mathcal{V}_d}\sum_{\ell = 1}^L \one\left\{(P-P_{B_\ell})\xi_V > \epsilon\right\} 
   \le & L \bigg(\frac{4}{B\epsilon^2} \mu_4 + \frac{|\cO|}{L} +  \frac{8}{N \epsilon} \sqrt{d \mu_4 |\mathcal{I}|} + \delta\bigg). \label{eq12}
\end{align}
We choose $\delta = \frac{2}{4+\eta} - \frac{|\cO|}{L}$ and \[\epsilon = 2\max\left\{\sqrt{\frac{8(4+\eta)\mu_4}{\eta}}\sqrt{\frac{L}{N}}, \frac{16\sqrt{d\mu_4}(4+\eta)}{\eta} \frac{\sqrt{|\I|}}{N}\right\}.\] The right hand side of \eqref{eq12} thus becomes strictly lesser than $\frac{L}{2}$.
Thus, we have shown that 
\begin{align*}
     \sP\left( \sup_{ V  \in \mathcal{V}_d} (P\xi_V - \text{MoM}^N_L (\xi_V))> \epsilon \right) \le e^{-2L \delta^2}.
 \end{align*}
Similarly, we can show that,
\begin{align*}
    \sP\left( \sup_{ V  \in \mathcal{V}_d} (\text{MoM}^N_L (\xi_V) -P\xi_V ) > \epsilon \right) \le e^{-2L \delta^2}.
\end{align*}
Merging the above two inequalities, we obtain: 
\[\sP\left( \sup_{ V  \in \mathcal{V}_d} |\text{MoM}^N_L (\xi_V) -P\xi_V | > \epsilon \right) \le 2e^{-2L \delta^2}.\]
Alternatively, with at least probability $1-2e^{-2L \delta^2}$,
\begingroup
\allowdisplaybreaks
\begin{align*}
     \sup_{ V  \in \mathcal{V}_d} |\text{MoM}^N_L (\xi_V) -P\xi_V | 
    \le & 2\max\bigg\{\sqrt{\frac{16(2+\eta)\mu_4}{\eta}}\sqrt{\frac{L}{N}},\frac{32\sqrt{d\mu_4}(2+\eta)}{\eta} \frac{\sqrt{|\I|}}{N}\bigg\}\\
    \le & C \sqrt{\mu_4} \left( \sqrt{\frac{L}{N}} +  \frac{\sqrt{d |\I|}}{N}\right).
\end{align*}
\endgroup
\end{proof}
From Theorem \ref{th2}, we can say that for any $ V  \in \mathcal{V}_d$, $\text{MoM}_L^N (\xi_V)$ and  $P \xi_V$ are close to each other with a high probability. Thus, one can expect their corresponding minimum values to be close enough with a high probability. Theorem \ref{th3} affirms this claim. We show that $ \mathfrak{R}(\hat{V}_d) \lesssim \sqrt{\E \|{\bx}\|^4}\left( \sqrt{\frac{L}{N}} +  \frac{\sqrt{d|\I|}}{N}\right)$ with a high probability.
\begin{theorem}
\label{th3}
Under A\ref{a1}--\ref{a3}, and $N>L$, with probability at least, $1-2 e^{-2L \left(\frac{2}{4+\eta} - \frac{|\cO|}{L}\right)^2}$,
\[ \mathfrak{R}(\hat{V}_d) \le 2 C \sqrt{\E \|{\bx}\|^4}\left( \sqrt{\frac{L}{N}} +  \frac{\sqrt{d|\I|}}{N}\right),\]
where $C$ is defined as in Theorem \ref{th2}.
\end{theorem}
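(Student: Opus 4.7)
The plan is to deploy the standard ``double the uniform deviation'' trick that turns a sup-norm concentration bound into an excess-risk bound, exploiting the optimality of $\widehat{Q}_{N,L}$ and $Q^\ast$ for their respective objectives.

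First I would work on the high-probability event $\mathcal{E}$ on which the conclusion of Theorem~\ref{th2} holds, i.e.\ on which
\[
\sup_{Q \in \mathcal{Q}_d} |\text{MoM}_L^N(f_Q) - Pf_Q| \le C \max\!\left\{\sqrt{L/N},\,\sqrt{|\I|}/N\right\} =: \epsilon.
\]
By Theorem~\ref{th2}, $\sP(\mathcal{E}) \ge 1 - 2\exp(-2L(\tfrac{2}{4+\eta} - |\cO|/L)^2)$, which is exactly the probability claimed in the statement, so the work reduces to showing the deterministic implication: on $\mathcal{E}$, $|Pf_{\widehat{Q}_{N,L}} - Pf_{Q^\ast}| \le 2\epsilon$.

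Next I would chain three inequalities. Since $Q^\ast$ minimises $Pf_Q$ over $\mathcal{Q}_d$, we have $Pf_{\widehat{Q}_{N,L}} \ge Pf_{Q^\ast}$, so it suffices to upper-bound $Pf_{\widehat{Q}_{N,L}} - Pf_{Q^\ast}$. Using Theorem~\ref{th2} at $Q = \widehat{Q}_{N,L}$ gives $Pf_{\widehat{Q}_{N,L}} \le \text{MoM}_L^N(f_{\widehat{Q}_{N,L}}) + \epsilon$. By the definition of $\widehat{Q}_{N,L}$ as the minimiser of the MoM objective, $\text{MoM}_L^N(f_{\widehat{Q}_{N,L}}) \le \text{MoM}_L^N(f_{Q^\ast})$. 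Applying Theorem~\ref{th2} again at $Q = Q^\ast$ yields $\text{MoM}_L^N(f_{Q^\ast}) \le Pf_{Q^\ast} + \epsilon$. Combining these three bounds gives $Pf_{\widehat{Q}_{N,L}} - Pf_{Q^\ast} \le 2\epsilon$, as desired.

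There is no real obstacle here: the argument is a one-line bookkeeping exercise once Theorem~\ref{th2} is in hand. The only place where one has to be mildly careful is that both tail events---namely $\sup_Q (\text{MoM}_L^N(f_Q) - Pf_Q) > \epsilon$ and $\sup_Q (Pf_Q - \text{MoM}_L^N(f_Q)) > \epsilon$---are absorbed into the single two-sided event governed by Theorem~\ref{th2}, so no extra union bound is needed and the probability $1 - 2\exp(-2L\delta^2)$ carries over verbatim with the same constant $C$ as in Theorem~\ref{th2}.
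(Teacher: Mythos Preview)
Your proposal is correct and follows essentially the same route as the paper: work on the event of Theorem~\ref{th2}, decompose $Pf_{\widehat{Q}_{N,L}} - Pf_{Q^\ast}$ via the intermediate MoM objective, drop the nonpositive term coming from the optimality of $\widehat{Q}_{N,L}$, and bound the remaining two pieces by the uniform deviation $\epsilon$. The paper writes the decomposition as a three-term telescoping sum and then takes a supremum, but the content is identical to your chained inequalities.
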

\begin{proof} 
Using the bound derived in equation \eqref{sup bd}, we observe the following:
\begin{align}\small
   \mathfrak{R}(\hat{V}_d)
    \le & 2 \sup_{V \in  \mathcal{V}_d} |\text{MoM}^N_L (\xi_V) - P \xi_V | 
    \le  2C\sqrt{\E \|{\bx}\|^4}\left( \sqrt{\frac{L}{N}} +  \frac{\sqrt{d|\I|}}{N}\right). \label{eq13}
\end{align}
From Theorem \ref{th2}, equation \eqref{eq13} holds with probability at least $1-2 e^{-2L \left(\frac{2}{4+\eta} - \frac{|\cO|}{L}\right)^2}$, proving the desired result.
\end{proof}
\subsection{Inference for Real Vector Spaces}
In this section, we discuss the main implications of Theorem \ref{th3} for real vector spaces. We consider the special case of $\Hilbert = \Real^p$. We observe that $\frac{2}{4+\eta} - \frac{|\cO|}{L} \ge \frac{2}{4+\eta} - \frac{1}{2+\eta} =\frac{\eta}{(2+\eta)(4+\eta)}>0$. Thus, if $L \to \infty$, $1-2 e^{-2L \left(\frac{2}{4+\eta} - \frac{|\cO|}{L}\right)^2} \to 1$. We note that,

\[\mathfrak{R}(\hat{V}_d) \lesssim \sqrt{\E \|{\bx}\|^4}\left( \sqrt{\frac{L}{N}} +  \frac{\sqrt{d|\I|}}{N}\right).\]
If each component of the inliers were generated independently from the same distribution, it is easy to observe that, $\E \|{\bx}\|^4 \asymp p^2$. Thus, it is only natural to make the following assumption about $P$.
\begin{ass}
$\E_{{\bx} \sim P} \|{\bx}\|^4 = \Theta(p^2)$.
\end{ass}
Thus, with a high probability, 
\[\mathfrak{R}(\hat{V}_d) \lesssim p \left( \sqrt{\frac{L}{N}} +  \frac{\sqrt{d|\I|}}{N}\right) \le  p \left( \sqrt{\frac{L}{N}} +  \sqrt{\frac{d}{N}}\right). \]
In other words, if $L = o\left(\frac{N}{p^2}\right)$ and $ p^2 d = o(N)$, then $\mathfrak{R}(\hat{V}_d) \to 0$, with probability tending to $1$.

To formally state this result, we make the following two assumptions.
\begin{ass}
\label{a4}
$L \to \infty$, as $N \to \infty$.
\end{ass}
\begin{ass}
\label{a5}
$p^2 L, p^2 d = o(N)$.
\end{ass}
Such conditions apply naturally: as $n$ increases, so too must $L$ to preserve a proportion of the outlier-free partitions. Besides,  the increase of $L$ should be slower than $n$ so that each partition can be assigned with sufficient data points. Note that since $d \le p$, if $p^3 = o(N)$, then the second part of assumption \ref{a5} is satisfied. In other words, $N$ has to increase faster than the cube of the feature space dimensions. The condition implied by A\ref{a5}  i.e., $|\mathcal{O}| = o(n)$ is intuitively appealing, and widely accepted \cite{lecue2020robust1,pmlr-v130-staerman21a} as the number of outliers should be small by definition.

Before we state our result in Corollary \ref{cor1}, we recall that $X_n = O_P(a_n)$ if the sequence of random variables $\{X_n/a_n\}_{n \in \mathbb{N}}$ is tight \cite{athreya2006measure}. 
\begin{corollary}\label{cor1}
Then under,  A\ref{a1}--\ref{a5}, $\mathfrak{R}(\hat{V}_d) = O_P\left(p \left( \sqrt{\frac{L}{N}} +  \sqrt{\frac{d}{N}}\right)\right)$ 
and $\mathfrak{R}(\hat{V}_d) \xrightarrow{P} 0 $. 
\end{corollary}
\begin{proof}
We note that $\mathfrak{R}(\hat{V}_d)    \lesssim p \left( \sqrt{\frac{L}{N}} +  \sqrt{\frac{d}{N}} \right) $, with probability at least $1-2 e^{-2L \left(\frac{2}{4+\eta} - \frac{|\cO|}{L}\right)^2}$. By A\ref{a3} and \ref{a5}, 
$2 e^{-2L \left(\frac{2}{4+\eta} - \frac{|\cO|}{L}\right)^2}=o(1)$. Thus, $\sP\left(\mathfrak{R}(\hat{V}_d) = O\left(p \left( \sqrt{\frac{L}{N}} +  \sqrt{\frac{d}{N}} \right)\right) \right)\ge 1-o(1)$. Hence, $|P f_{\widehat{\bQ}_{N,L}} - P f_{\bQ^\ast}| = O_P\left(p \left( \sqrt{\frac{L}{N}} +  \sqrt{\frac{d}{N}} \right)\right)$

Under A\ref{a5}, 
$p \left( \sqrt{\frac{L}{N}} +  \sqrt{\frac{d}{N}} \right) = o(1) \, \implies \, \mathfrak{R}(\hat{V}_d) =o_P(1) $\footnote{$X_n = o_P(a_n)$ if $X_n/a_n \xrightarrow{P} 0$ \cite{athreya2006measure}.}. 
\end{proof}
To conclude that $\hat{V}_d \xrightarrow{P} V^\ast_d$, we need to ensure that $V_d^\ast$ is identifiable. Towards ensuring that, we make the following identifiablity assumption on $V_d^\ast$. This type of identifiability conditions are especially popular in clustering literature \cite{pollard1981,chakraborty2020entropy}. 
\begin{ass}
\label{a6}
For all $\epsilon>0$, there exists $\delta>0$, such that, if $\|V-V_d^\ast\|>\epsilon$, $\mathfrak{R}(V) \ge  \delta$.
\end{ass}
We will say that $V_n \to V$ if $\|V_n-V\|\to 0$. With this notion of convergence in the Frobenious sense, we are now ready to prove that $\hat{V}_d$ is consistent for $V_d^\ast$.
\begin{corollary}
Under A\ref{a1}-\ref{a6}, $\hat{V}_d \xrightarrow{P} V_d^\ast$.
\end{corollary}
\begin{proof}
We fix $\epsilon>0$. By A\ref{a6}, there exists $\delta>0$, such that, if $\|\hat{V}_d - V_d^\ast\|>\epsilon$, $\mathfrak{R}(\hat{V}_d) \ge \delta$. Thus, 
\begin{align*}
    \sP(\|\hat{V}_d - V_d^\ast\| > \epsilon) \le \sP\left( \mathfrak{R}(\hat{V}_d) \ge \delta \right)
    & \to 0,
\end{align*}
as $N \to \infty$, by appealing to Corollary \ref{cor1}. Thus, $\|\hat{V}_d - V_d^\ast\| \xrightarrow{P} 0 \iff \hat{V}_d \xrightarrow{P} V_d^\ast$. 
\end{proof}

\textbf{Remark: Cost of Robustness}
The above results under our paradigm, the  MoMPCA estimates admit an excess risk of $O\left(p \left( \sqrt{\frac{L}{N}} +  \sqrt{\frac{d}{N}}\right)\right) $. We observe that since $L\ge 1$, $p \left( \sqrt{\frac{L}{N}} +  \sqrt{\frac{d}{N}}\right) = \Omega\left(n^{-1/2}\right)$. Thus, our framework's convergence rate for MoMPCA is generally slower than its ERM counterpart, which has a rate of $O(n^{-1/2})$. This reiterates that there is ``no free lunch" when compromising robustness for the convergence rate, which is not unusual given that MoM operates on data contaminated with outliers. However, if the number of partitions $L$ increases slowly compared to $n$ (say, $L=O(\log n)$  and $|\cO|=O(\log n)$), the excess risk for MoMPCA estimates draws closer to its ERM counterpart at a rate of $\widetilde{O}(n^{-1/2})$.

\textbf{Remark: Choice of $L$} If the number of partitions can mitigate the effect of the outliers, under the proposed framework, the excess risk of the robust MoM estimates decreases with the block size at the rate of $b$ as $1/\sqrt{b} = \sqrt{L/n}$. Since $L$ can identifiability approximately as $2|\cO|$, the corresponding excess risk has the rate of  $O(\sqrt{|\cO|/n})$. However, the error bound $O(\sqrt{|\cO|/n})$ becomes vacuous if $|\cO| \propto n$. Thus, it is essential that $|\cO| = o(n)$ for our consistency guarantees to hold, as this enables us to select $L$ satisfying  A\ref{a4}-\ref{a5}.  One should note that one can achieve an error rate of $O(n^{(\beta-1)/2})$ if $|\cO| = O( n^{\beta}) $, for some $0<\beta<1$.

\textbf{Remark: Comparison with MCM-PCA} The recent array of works on geometric median-based approaches provide an attractive alternative to our proposal \cite{cardot2013efficient,cardot2017fast,fritz2012comparison,cohen2016geometric}. However, the two approaches are significantly different. The geometric median-based approaches focus on finding a robust estimate of the dispersion matrix using a geometric median-based loss function. These approaches then go on to find a suitable subspace for low-dimensional representation of the data by an eigen decomposition of this robust estimate of the covariance matrix. On the other hand, MoMPCA approaches changing the ERM problem and introducing a robust estimate of the projection error. Theoretically, we derive finite-sample error bounds in Hilbert space without imposing any assumption on the outliers, thus relaxing the i.i.d. assumption of the data distribution imposed in \cite{cardot2017fast}.

\textbf{Remark}
The MoMPCA method proposed in the paper can provide a more accurate analysis of data with outliers compared to existing methods. This is because the method is specifically designed to handle outlier data, and does not rely on unrealistic assumptions about the data distribution. The MoMPCA method does not make assumptions about the distribution of the outlier data. This allows for greater flexibility in dealing with different types of outlier behavior, making the method more useful in real-world applications where the nature of outliers may not be known in advance.
 Furthermore, the MoMPCA method provides a more interpretable understanding of PCA in the presence of outliers. This can be especially useful in applications where the insights provided by the analysis are important for decision-making. We note that MoMPCA does not require the data to have nice tail-conditions such as sub-Gaussian or sub-exponential  behavior as required by many relevant works in this direction \cite{paul2021uniform,lecue2020robust}, which are often unrealistic assumptions for real-world data. This means that the method can be applied to a wider range of datasets, making it more useful in practice.

\section{Experimental Results}
\label{experments}

\begin{table*}[ht]
    \centering
    \resizebox{\textwidth}{!}{
    \begin{tabular}{c c c c c c c c c}
    \hline
      $n$ & $p$ & rank(${\bx}_0$) & PCP \cite{candes2011robust} & PCPF \cite{pmlr-v48-chiang16} & MFRPCA \cite{chen2018robust} & RWL-AN \cite{NEURIPS2019_73f104c9} & MCM-PCA \cite{cardot2017fast}  & MoMPCA (Proposed)\\
      \hline
       $500$  &  $500$ &  $10$ & $1.6\times 10^{-3}$ & $\mathbf{1.5\times 10^{-3}}$ & $2.3\times 10^{-3}$ & $8.7\times 10^{-3}$ & $1.8\times 10^{-3}$ & $\mathbf{1.5\times 10^{-3}}$ \\
       $1000$ & $500$ & $ 10$ & $7.3\times 10^{-4}$ & $2.8\times 10^{-4}$ & $1.8\times 10^{-3}$ & $2.9\times 10^{-4}$ &  $2.7\times 10^{-4}$  & $\mathbf{2.1\times 10^{-4}}$\\
       $2000$ & $500$ & $10$ & $8.8\times 10^{-4}$ & $1.1\times 10^{-4}$ & $8.3\times 10^{-4}$ & $3.1\times 10^{-4}$ &  $6.4\times 10^{-5}$ & $\mathbf{5.6\times 10^{-5}}$\\
       $5000$ & $500$ & $10$ & $2.1\times 10^{-4}$ & $9.6\times 10^{-5}$ & $5.7\times 10^{-4}$ & $3.3\times 10^{-4}$ &  $8.1\times 10^{-6}$ & $\mathbf{7.2\times 10^{-6}}$\\
       $10000$ & $500$ & $10$ & $7.8\times 10^{-5}$ & $8.2\times 10^{-5}$ & $2.9\times 10^{-4}$ & $9.5\times 10^{-5}$ &  $4.3\times 10^{-7}$  & $\mathbf{3.8\times 10^{-7}}$\\
      \hline
    \end{tabular}
    }%
    \caption{Performance of different peer algorithms for low-rank matrix representation in terms of the relative reconstruction error.}
    \label{tab_syn}
\end{table*}

\begin{table*}[ht]
    \centering
    \resizebox{\textwidth}{!}{
    \begin{tabular}{c c c c c c c c c}
    \hline
     $n$ & $p$ & rank(${\bx}_0$) & PCP \cite{candes2011robust} & PCPF \cite{pmlr-v48-chiang16} & MFRPCA \cite{chen2018robust} & RWL-AN \cite{NEURIPS2019_73f104c9} & MCM-PCA \cite{cardot2017fast}  & MoMPCA (Proposed)\\
      \hline
       $500$  &  $500$ &  $10$ & $106.05$ & $104.13$ & $106.81$ & $8.14$ & $2.34$ &  $5.18$\\
       $1000$ & $500$ & $10$ & $320.43$ & $311.18$ & $321.32$ & $11.40$ &  $4.94$  & $10.31$\\
      $2000$ & $500$ & $10$ & $952.87$ & $932.65$ & $956.05$ & $26.76$ &  $10.35$ & $21.27$\\
       $5000$ & $500$ & $10$ & $3203.54$ & $3197.43$ & $3215.29$ & $49.03$ &  $21.02$  & $53.28$\\
       $10000$ & $500$ & $10$ & $9608.68$ & $9547.83$ & $9677.04$ & $103.42$ & $45.96$  & $98.04$\\
      \hline
    \end{tabular}
    }%
    \caption{Runtime Comparison of different peer algorithms for low-rank matrix representation.}
    \label{tab_runtime}
\end{table*}

\begin{figure*}
\centering
\subfloat[MoMPCA vs MCM-PCA]{\includegraphics[width=0.3\textwidth]{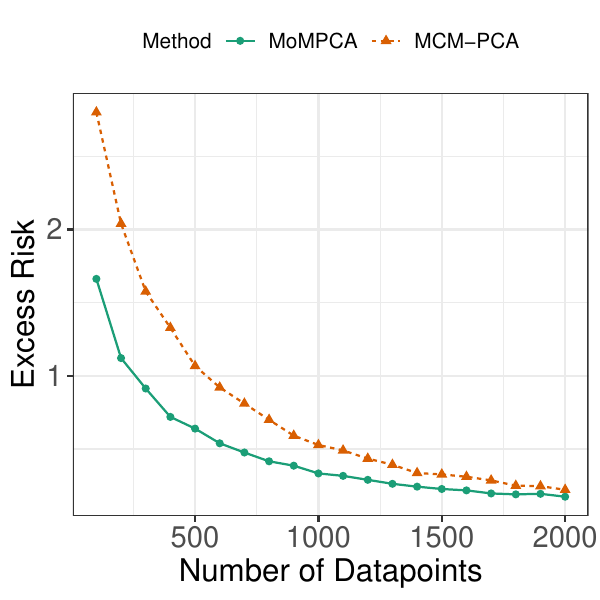}}\label{mommcm}
\subfloat[Vanilla PCA] {\includegraphics[width=0.3\textwidth,height=0.27\textwidth]{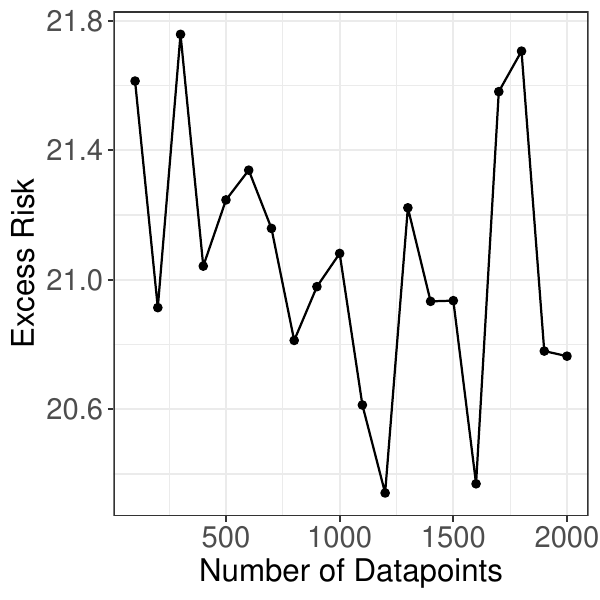}}\label{vanilla}

\caption{Comparison of excess risk of MoMPCA with MCM-PCA and vanilla PCA as discussed in Section \ref{recover}. The vanilla PCA is shown in a different picture for a difference in scale.} \label{fig_recover}
\end{figure*}

In this section, we demonstrate the efficacy of MoMPCA on synthetic and natural datasets. We apply MoMPCA to perform different tasks, including low-rank matrix reconstruction in the presence of outliers, background modeling for video data, and anomaly detection for real data benchmarks. The codes and machine specifications are given at \url{https://github.com/SaptarshiC98/MOMPCA}.

{\textbf{Parameter Selection:} The gradient descent parameter $\eta$ used for the optimization here is typically chosen based on standard methods \cite{ruder2016overview}. In our experiments, we fix $\eta$ at $0.01$. Thus, the only parameter to be selected is the number of partitions $L$. By the relationship $LB=n$, the block size $B$ is chosen automatically once $L$ is fixed. For the PCA to be effective, the block size must be greater than $p$, which imposes an upper bound on $L$ to be $\frac{n}{p}$. On the other hand, to maintain the parity with the theoretical results, we notice in assumption \ref{a3} that $L$ must be greater than $2|\mathcal{O}|$, where $|\mathcal{O}|$ is the number of outliers present in the datapoint. This imposes a lower bound on $L$, and hence in ideal cases, $L$ should be between $2|\mathcal{O}|$ and $\frac{n}{p}$. However, the number of outliers present in a dataset is typically unknown, and $L$ must be greater than $2|\mathcal{O}|$ signifies a particular case where every partition contains exactly one outlier. In practice, since the partitions are done randomly, the probability of that happening is very low. Hence, we may select $L$ to be much less than $2|\mathcal{O}|$. This relaxation in the choice of $L$ leads to the possibility of using the MoMPCA algorithm for datasets where the number of outliers is quite large and $2|\mathcal{O}|$ surpasses the quantity $\frac{n}{p}$. Hence, in practice, we have chosen $L$ to be less than $\frac{n}{p}$, with regard to being sufficiently large so that the median class can be free of outliers.}

\subsection{Simulation Study for Excess Risk Comparison}\label{recover}
To empirically verify the efficacy of MoMPCA, we perform a simulation study to compare against vanilla PCA and the recently proposed MCM-PCA \cite{cardot2017fast}.  We generate the inlying observations of the dataset independently from $p=10$ dimensional multivariate Gaussian distribution with mean $\mathbf{0}$ and covariance matrix $\Sigma=\text{diag}(10,9,\cdot,2,1)$.  Thus, the first two dimensions are most important when projecting the entire data in a two-dimensional subspace.  The outliers, which consist of $10\%$ of datasets, are generated independently from multivariate Gaussian as well, with mean vector $(-20,20,0,0,0,0,0,0,0,0)$ and $I_{10}$ as the covariance matrix. 

We then run all three algorithms on the datase,ts, starting with a total number of datapoints $n=100$ to $n=2000$, on a difference of $100$.  The experiments for each value of $n$ are repeated $500$ times.  We then report the average value of the excess risk for the inliers, which is defined in Eq. \eqref{test_error} takes the form as follows,
\begin{align*}
    \mathfrak{R}(\hat{P}) =\E\left( \|P_0{\bx}\|^2 - \|\hat{P}{\bx}\|_2^2 \right)=\E \, \left({\bx}^\top (P_0-\hat{P}) {\bx}\right)
    &=\E \, \text{Trace}\left({\bx}^\top (P_0-\hat{P}) {\bx}\right)\\
    &= \E \, \text{Trace}\left((P_0-\hat{P}){\bx}{\bx}^\top\right)\\
    &= \text{Trace}\left((P_0-\hat{P})\Sigma\right)
\end{align*}
We calculate the excess risk for each algorithm, where $P_0$ is the actual projection matrix for projecting onto the first two-dimension and $\hat{P}$ is the projection matrix estimated by the respective algorithms.  Then, we plot the average value of test error for each of the $3$ algorithms and compare them in Fig.~\ref{fig_recover}.  We can clearly see that the proposed MoMPCA algorithm performs the best among its peers in terms of test error.  In Fig.~\ref{fig_recover}, we show the excess risk of MoMPCA and MCM-PCA in the same subfigure since both have a similar decay rate, with MoMPCA consistently better.  On the other hand, the vanilla PCA falls apart and has an excess risk much more significant than its peers.  Additionally, one should note that the rate of decrease in the excess risk resembles the risks derived in Corollary \ref{cor1}, validating the theoretical analysis in the process.

\subsection{A Simulation Study on Recovering Low-rank Matrices}\label{low}
To empirically validate the efficacy of MoMPCA, we performed this dimensionality reduction procedure on various simulated datasets.  We have generated the matrix ${\bx}_0$ as a product of two matrices ${\bx}_{1,0}$ and ${\bx}_{2,0}$ of orders $n\times r$ and $r\times p$ respectively where each entry is simulated from the Gaussian distribution, where $r$ is the rank of ${\bx}_0$ and $r< \text{min} \{n, p\}$. To add outliers, we add random noise generated from $Unif(-500,500)$ to randomly selected $\sqrt{n}$ many rows of ${\bx}_0$.

We then run the MoMPCA algorithm along with the peer algorithms on ${\bx}_0$.  We calculate the efficacy of each of the relative reconstruction errors as $\frac{\|{\bx}-{\bx}_0\|}{\|{\bx}_0\|}$, where the norm is taken over those rows that do not contain an outlier. Here $X$ is the projected matrix in the lower $d$-dimensional affine space.  Clearly, a lower score represents a better representation in lower-dimensional space.  Since the true structure of the matrix lies in a lower-dimensional space, the MoMPCA method successfully plots the dataset in a lower-dimensional space despite having $\sqrt{n}$ many outliers due to its robustness property.

We compare our method with the baselines as well as state-of-the-art methods such as Principal Component Pursuit (PCP) \cite{candes2011robust}, Principal Component Pursuit with Features (PCPF) \cite{pmlr-v48-chiang16}, Matrix-Factorization Based Robust Principal Component Analysis (MFRPCA) \cite{chen2018robust} and Robust Weight Learning with Adaptive Neighbors (RWL-AN) \cite{NEURIPS2019_73f104c9}.  The standard protocols used by each of the original papers of the competing algorithms have been implemented in our experiments.  The algorithms providing non-deterministic output have been run $20$ times, and the average values obtained have been reported.  The results demonstrated in table \ref{tab_syn} clearly indicate that out of all the methods, the proposed MoMPCA algorithm works best in terms of the reconstruction error.

\textbf{\textit{Runtime Analysis:}} We run each of the algorithms as demonstrated in section \ref{low} and compare their total runtime. In table \ref{tab_runtime}, the total time in seconds needed to run the algorithms on an M1 Macbook Pro with $8$ GB RAM and $256$ GB storage is provided.  As evident, the first three algorithms, namely PCP \cite{candes2011robust}, PCPF and MFRPCA are all matrix-factorization-based algorithms and hence take a huge amount of time to run as compared to the rest.  The proposed algorithm MoMPCA is run for $50$ iterations since the algorithm mostly converges much before.  We can clearly see from the runtime analysis that the time required to run MoMPCA is almost equivalent to that of MCM-PCA while providing a better performance against the outliers. 

\subsection{Background Modeling in Video}
\label{bmv}
\begin{figure*}
\centering
\subfloat[Original frame]{\includegraphics[width=0.3\textwidth]{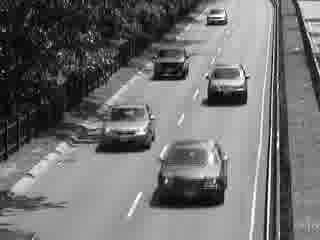}}\label{fig:1a}
\subfloat[Background constructed via low-rank approximation through MoMPCA] {\includegraphics[width=0.3\textwidth]{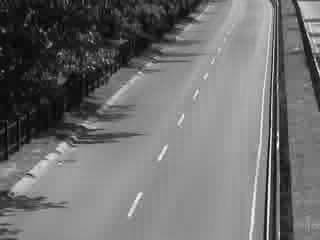}}\label{fig:1b}
\subfloat[Object]{\includegraphics[width=0.3\textwidth]{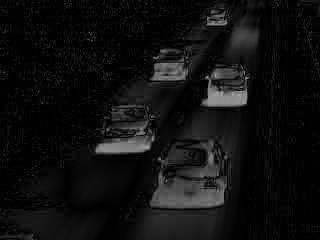}}\label{fig:1c}
\caption{Background modeling through MoMPCA. The background is modeled as a low-dimensional approximation, calculated through MoMPCA as described in section \ref{bmv}.} \label{fig:bmv}
\end{figure*}
\begin{table}\small
    \centering
    \begin{tabular}{c c c c}
    \hline
        Data & \# Dimensions & \# Instances & Anomaly ratio   \\
        \hline
         KDDCUP & 120 & 494,021 &  0.2\\
         Thyroid & 6 & 3,772 & 0.025\\
         Arrhythmia & 274 & 452 & 0.15\\
         \hline
    \end{tabular}
    \caption{Statistics of different anomaly detection benchmarks}
    \label{tab:stat_anomaly}
\end{table}
\begin{table*}
\centering
\resizebox{\textwidth}{!}{
  \begin{tabular}{|l|l l l|l l l|l l l|l|}
    \hline
    \multirow{2}{*}{} &
      \multicolumn{3}{c|}{KDDCUP} &
      \multicolumn{3}{c|}{Thyroid} &
      \multicolumn{3}{c|}{Arrhythmia} &
      \multicolumn{1}{c|}{Average}\\
      \hline
    Method & Precision & Recall & $F_1$ & Precision & Recall & $F_1$ & Precision & Recall & $F_1$ & rank ($F_1)$\\
    \hline
   OC-SVM \cite{chen2001one} & 0.7457 &  0.8523 &  0.7954 & 0.3639 & 0.4239 & 0.3887 & 0.5397 & 0.4082 &  0.4581 & 6.67\\
DSEBM-r \cite{zhang2017joint} & 0.8521 & 0.6472 & 0.7328 & 0.0404 & 0.0403 & 0.0403 & 0.1515 &  0.1513 & 0.1510 & 13.33\\
DSEBM-e \cite{zhang2017joint} & 0.8619 &  0.6446 & 0.7399 & 0.1319 & 0.1319 & 0.1319 & 0.4667 &  0.4565 &  0.4601 & 9.00\\
DCN \cite{yang2017towards} & 0.7696 & 0.7829 & 0.7762 & 0.3319 & 0.3196 &  0.3251 & 0.3758 &  0.3907 &  0.3815 & 8.67\\
GMM-EN \cite{zong2018deep} & 0.1932 & 0.1967 & 0.1949 & 0.0213 & 0.0227 &  0.0220 & 0.3000 &  0.2792 & 0.2886 & 14.33\\
PAE \cite{zong2018deep} & 0.7276 & 0.7397 & 0.7336 & 0.1894 & 0.2062 & 0.1971 & 0.4393 & 0.4437 & 0.4403 & 10.33 \\
E2E-AE \cite{zong2018deep} & 0.0024 &  0.0025 & 0.0024 & 0.1064 & 0.1316 & 0.1176 & 0.4667 & 0.4538 &  0.4591& 11.67\\
PAE-GMM-EM \cite{zong2018deep} & 0.7183 & 0.7311 & 0.7246 & 0.4745 & 0.4538 & 0.4635 & 0.3970 & 0.4168 & 0.4056 & 10.00\\
PAE-GMM \cite{zong2018deep} & 0.7251 & 0.7384 &  0.7317 & 0.4532 & 0.4881 & 0.4688 & 0.4575 & 0.4823 & 0.4684 & 7.17\\
DAGMM-p \cite{zong2018deep} & 0.7579 & 0.7710 & 0.7644 & 0.4723 & 0.4725 & 0.4713 & 0.4909 & 0.4679 & 0.4787 & 4.67\\
DAGMM-NVI \cite{zong2018deep} & 0.9290 & \textbf{0.9447} & 0.9368 & 0.4383 & 0.4587 & 0.4470 & 0.5091 & 0.4892 & 0.4981 & 4.00\\
DAGMM \cite{zong2018deep} & \textbf{0.9297} & 0.9442 & \textbf{0.9369} & 0.4766 & 0.4834 & 0.4782 & 0.4909 & 0.5078 & 0.4983 & 1.67 \\

RSPCA\cite{bian2022robust} & 0.8503 & 0.6587 & 0.7480 & 0.4532 & 0.4881 & 0.4688 & 0.3523 & 0.3765 & 0.3706 & 8.50 \\

SPCA-Barron \cite{dhanaraj2022robust} & 0.7523 & 0.7602 & 0.7657 & 0.2245 & 0.2361 & 0.2398 & 0.4327 & 0.4592 & 0.4516 & 8.33 \\

MoMPCA (Proposed) & 0.8966 & 0.9106 & 0.9035 & \textbf{0.6974} & \textbf{0.5699} & \textbf{0.6272} & \textbf{0.5469} & \textbf{0.5303} & \textbf{0.5385} & \textbf{1.67} \\
    \hline
  \end{tabular}
  }
  \caption{Performance of different peer algorithms for anomaly detection. The average ranks based on the $F_1$ scores are also reported.  The best performances for each data w.r.t.  each metric are boldfaced.}
  \label{anomaly}
\end{table*}
Suppose we are given a video with $f$ many frames.  The objective here is to isolate the moving objects in the video, such as moving cars, pedestrians, etc.  For simplicity, we only consider black and white videos (and thus, there is only a single channel) and let each frame be of size $m \times n$.  We construct $\mathcal{X}=\{{\bx}_1,\dots, {\bx}_{mn}\}\subset \mathbb{R}^f$ by computing a $f \times 1$ vector whose features represent the value of a particular pixel for each frame.  Since consecutive frames are highly correlated, one can expect that the data cloud $\mathcal{X}$ lies near a low-dimensional affine space in $\mathbb{R}^f$.  Thus, the background of the video can be modeled as the projection of $\mathcal{X}$ on this low-dimensional affine space.  To demonstrate the efficacy of modeling the background through MoMPCA, we take the standard video, called ``highway'' (available in the github repository), which captures moving cars on a busy highway.  The video contains $1700$ frames, and each frame is of size $240 \times 320$.  We construct a $76800 \times 1700$ data matrix whose rows denote the data points.  We run the MoMPCA with $L=40$ and $d=5$.  The background is thus given by $\{\bQ{\bx}_1,\dots,\bQ{\bx}_{76800}\}$, where $\bQ$ is the projection matrix, given by Algorithm \ref{algo}. The output background for frame no. 863 is shown in Fig. \ref{fig:bmv} along with the original frame. The object is constructed by taking $\{\|\bx_i-\bQ{\bx}_i\|_1\}_{i=1}^{76800}$. The object conforms to the moving cars in the original frame, and the background only consists of the highway and motionless trees. 

\subsection{An application in Anomaly Detection}
Anomaly or outlier detection is a key problem in machine learning and computer vision.  In this section, we will focus on how to detect outliers through MoMPCA. Suppose $\mathcal{X}=\{{\bx}_1,\dots,{\bx}_n\} \subset \Real^p$ be $n$ data points in the $p$-dimensional real vector space. To detect outliers within the data, we first project $\mathcal{X}$ to a $d$-dimensional affine space via MoMPCA.  We then compute the squared Euclidean distance between the original and projected data points $v_i=\|{\bx}_i - \widehat{\bQ}_{N, L} {\bx}_i\|$ and sort $v_i$'s in ascending order.  We call the $i$-th point an outlier if $v_i$ belongs to the largest $o\%$ of the $\{v_i\}_{i=1}^n$, where $o$ is known beforehand.

For our comparative analysis, we take the KDDCUP, Thyroid, and Arrhythmia datasets, available from the UCI machine learning repository \cite{Dua:2019} and ODDS library \cite{Rayana:2016}.  The details of these datasets are reported in Table \ref{tab:stat_anomaly}.  We compared our method with different state-of-the-art techniques for outlier detection including OC-SVM \cite{chen2001one}; DSEBM-e \& DSEBM-r \cite{zhang2017joint}; DCN \cite{yang2017towards}; GMM-EN, PAE, E2E-AE, PAE-GMM-EM, PAE-GMM, DAGMM-p, DAGMM-NVI, DAGMM \cite{zong2018deep}, RSPCA \cite{bian2022robust} and Stochastic PCA with Barron loss \cite{dhanaraj2022robust}. Many competing algorithms employ a deep neural network to detect anomalies within the data.  To measure the performance of the peer algorithms, we take the average precision, recall, and $F_1$ score between the ground truth and the obtained labeling (inlier/outlier) of the data points.  These measures of accuracy for all three datasets are reported in Table \ref{anomaly}.  The performance indicator values for the peer algorithms are quoted from \cite{zong2018deep}.  It is observed from Table \ref{anomaly} that the MoMPCA is quite competitive against state-of-the-art anomaly detection methods, including even the ones based on deep neural networks.  

\section{Conclusion}
\label{conclusion}

Despite the efficacy, computational simplicity, and ease of visualization of the classical PCA, it often fails to successfully represent the true low-dimensional structure of a dataset in the presence of even a small number of outliers. The traditional approach to formulating robust principal component analysis (PCA) assumes that the data matrix can be decomposed into a low-rank signal component and a noise component. However, this framework is not suitable when dealing with outliers that follow arbitrary distributions and/or are correlated to each other.

This paper proposes an alternative PCA method based on the Median of Means (MoM) estimator to circumvent this problem. The eigen-decomposition trick for PCA cannot be applied in this context, so we use an alternative approach involving projected Adagrad. The proposed MoMPCA, equipped with a computationally simple gradient descent-based optimization procedure, exhibits significant robustness to the presence of outliers. Under minimal and interpretable assumptions, we establish the consistency of MoMPCA in a general separable Hilbert space and find the convergence rate using uniform concentration bounds. The paper's theoretical analysis is carried out with the aid of symmetrization arguments and Rademacher complexities, which, although extensively used in a supervised learning setting, seldom find application in an unsupervised learning scenario.  
The parametric rates for real vector spaces can easily be recovered by taking the Hilbert space to be $\Real^p$. The applicability of our theoretical results to an infinite-dimensional Hilbert space to derive dimension-free bounds with minimal assumptions, is not only novel but also opens up exciting avenues for future research in robust kernel-based methods.

Through practical applications in computer vision, MoMPCA is shown to be compelling relative to even some of the recent deep learning models. The robustness of MoM estimators comes at the expense of slower convergence rates than their ERM counterparts, as demonstrated in the paper. We stress the fact that there is \textit{no wizardry in the median of means estimator} and that the interplay between the partitions and the outliers determines how effective MoM is. A potential future extension of our work could be to make the optimization faster by using adaptive gradient-based optimizers and proving their convergence properties. 
Future research in this direction might render fruitful avenues in improving the ``slow'' ERM rates by establishing fast rates under more restrictive assumptions \cite{boucheron2005theory,wainwright2019high} or finding lower bounds on the approximation error.

\bibliographystyle{unsrtnat}

\end{document}